\theoremstyle{definition}
\newtheorem{definition}{Definition}[section]
\newtheorem{example}[definition]{Example}
\newtheorem{lemma}[definition]{Lemma}
\newtheorem{theo}[definition]{Theorem}
\newtheorem{corollar}[definition]{Corollary}
\newtheorem{remark}[definition]{Remark}
\newcommand\abs[1]{\left|#1\right|}
\begin{document}

\twocolumn[
\aistatstitle{Unifying local and global model explanations by functional decomposition of low dimensional structures}

\aistatsauthor{ Munir Hiabu  \And Joseph T. Meyer  \And  Marvin N.~Wright }

\aistatsaddress{
  University of Copenhagen\\
   \And    
  Heidelberg University \\
  \And  
  Leibniz Institute for Prevention\\ Research and Epidemiology – BIPS,\\ University of Bremen,\\University of Copenhagen
  } 
  ]

\begin{abstract}
       We consider a global representation of a regression or classification function by decomposing it into the sum of main   and interaction components of arbitrary order. We propose a new identification constraint that allows for the extraction of interventional SHAP values and partial dependence plots, thereby unifying local and global explanations. 
        With our proposed identification, a feature's partial dependence plot corresponds to the main effect term plus the intercept. The interventional SHAP value of feature $k$ is a weighted sum of the main component and all interaction components that include $k$, with the weights given by the reciprocal of the component's dimension. This brings a new perspective to local explanations such as SHAP values which were previously motivated by game theory only.
       We show that the decomposition can be used to reduce direct and indirect bias by removing all components that include a protected feature. Lastly, we motivate a new measure of feature importance. In principle, our proposed functional decomposition can be applied to any machine learning model, but exact calculation is only feasible for low-dimensional structures or ensembles of those. We provide an algorithm and efficient implementation for gradient-boosted trees (xgboost) and random planted forest. Conducted experiments suggest that our method provides meaningful explanations and reveals interactions of higher orders. 
       The proposed methods are implemented in an R package, available at \url{https://github.com/PlantedML/glex}.

\end{abstract}

\section{INTRODUCTION}

In the early years of machine learning interpretability research, the focus was mostly on single-value global feature importance methods that assign a single importance value to each feature. More recently, the attention has shifted towards local interpretability methods, which provide explanations for individual observations or predictions. Popular examples of the latter are LIME \citep{ribeiro2016should} and SHAP \citep{shapley1953value,lundberg2017unified}. The major reason for this shift is that local methods provide a more comprehensive picture than single-value global methods, most importantly in presence of nonlinear effects and interactions. This, however, neglects the fact that global methods can be more than single-value methods: Ideally, a global method provides useful information about the entire regression or classification function by providing an explanation for each feature and each interaction effect of arbitrary order, relative to the values they take on. As with local methods, this gives us an explanation for each observation. The crucial difference is that two observations which have a set of feature values in common receive the same explanation for main and interaction effects involving exclusively those features. We call a representation of a function with this property \textit{global}. The components of a global explanation are not specific to all feature values of an observation but only to the feature values corresponding to the respective component. This does not only give a more comprehensive picture than local methods but the complete picture.

In summary, we distinguish between three properties of an explanation of a function. 
\begin{itemize}
    \item single-value global: Each feature $j\in \{1,\dots,d\}$ receives a single descriptive value $v_j\in \mathbb R$ which does not depend on $x\in\mathbb{R}^d$.
    \item global: Each subset of features $S\subseteq\{1,\dots,d\}$ receives a descriptive function $m_S: \ \mathbb{R}^S\rightarrow\mathbb{R}$ which only depends on values $x_S=\{x_k: k\in S\}$ and not on other values $x_{-S}=\{x_{j}: j\notin S\}$. 
    \item local: Each subset of features $S\subseteq\{1,\dots,d\}$ receives a descriptive function $\phi_S: \ \mathbb{R}^d\rightarrow\mathbb{R}$ which may depend on all values of  $x\in\mathbb{R}^d$.
\end{itemize}

In this paper, we introduce a global explanation procedure by identifying components in a functional decomposition. We show that the proposed explanation is identical to $q$-interaction SHAP \citep{tsai2022faith}, where $q$ corresponds to the maximal order of interaction present in the model to be analyzed. Hence, we provide a new interpretation of SHAP values which is not game-theoretically motivated. 
\cite{tsai2022faith} argue that it is  practically not feasible to calculate $q$-interaction SHAP exactly because of computational complexity.
However, the authors implicitly assume $q=d$, i.e., the highest order of interaction present in the initial estimator is equal to the number of features.
We argue that this is not the case for many state-of-the-art machine learning algorithms that only fit low dimensional structures or ensembles of those.
We exploit this fact and discuss an implementation that exactly calculates $q$-interaction SHAP for tree-based machine learning models. 
In principle, our results can be applied to any model and our algorithm can be applied to any tree-based model. However, since the number of components grows exponentially with increasing  $q$, exact calculation is only feasible if $q$ is sufficiently small. We provide a fast implementation for \textit{xgboost} \citep{chen2016xgboost} and \textit{random planted forest} \citep{hiabu2020random}. 


As a result, one dimensional contributions $m_{k}$ and two-dimensional contributions  $m_{jk}$ are one and two-dimensional real-valued functions that can be plotted. Furthermore, together with higher order contributions they can be used to decompose simple SHAP values into main effects and all involved interaction effects. Additionally, main and interaction components can be summarized into feature importance values. Beyond explaining feature effects, our proposed decomposition can be used to detect bias in models where LIME and SHAP fail \citep{slack2020fooling} and reduce such bias by removing individual components from the decomposition. 

The proposed methods are implemented in an R package, available at \url{https://github.com/PlantedML/glex}. 
Code to reproduce all figures and tables is available at \url{https://github.com/PlantedML/glex_paper}. 
Some further details and proofs to all lemmata and theorems are provided in the Appendix.

\subsection{Motivating Example}\label{sec:example}
We will give a toy example of how the interplay of correlations and interactions can give rise to misleading 
SHAP values. 
Consider the function $m(x_1,x_2)=x_1+x_2 + 2x_1x_2 $. The interventional SHAP value for the first feature is
$
\phi_1(x_1,x_2)=x_1-E[X_1]+x_1x_2 -E[X_1X_2]+x_1E[X_2]-x_2E[X_1],
$
see Appendix~\ref{generalexpansion}.
If the features are standardized, i.e.,  $X_1$ and $X_2$ have mean zero and variance one, the expression reduces to 
\[
\phi_1(x_1,x_2)=x_1+x_1x_2 -\textrm{corr}(X_1,X_2).
\]
Hence, e.g., if $\textrm{corr}(X_1,X_2)=0.3$, an individual with $x_1=1$ and $x_2=-0.7$ would see a SHAP value of 0 for the first feature:
\[
\phi_1(1,-0.7) = 0.
\] 
This is quite misleading, since clearly $x_1$ has an effect on the response $m$ that is irrespective of the particular value of $x_1$.
The underlying problem is that locally at $(x_1,x_2)=(1,-0.7)$, the main effect contribution and interaction contribution cancel each other out.
Indeed, we will see that the SHAP value  $\phi_1$ can be decomposed into a main effect contribution of $x_1$, which is
 $m_1^\ast(x_1)=x_1-2\textrm{corr}(X_1,X_2)=0.4$  and an interaction contribution of $\{x_1,x_2\}$, which is $0.5 \times m^\ast_{12}(x_1,x_2)=x_1x_2+\textrm{corr}(X_1,X_2)=-0.4$.
Figure~\ref{fig:example} shows SHAP values and the functional decomposition of an \textit{xgboost} model of the function $m(x_1,x_2)$. The SHAP values $\phi_1$ and $\phi_2$ contain main effect contributions $m_1^\ast$ and $m_2^\ast$ as well as half of the interaction contribution $m^\ast_{12}$ each. The functional decomposition separates the contributions $m^\ast_1$, $m^\ast_2$ and $m^\ast_{12}$. 

\begin{figure}
    \centering
    \includegraphics[width=\linewidth]{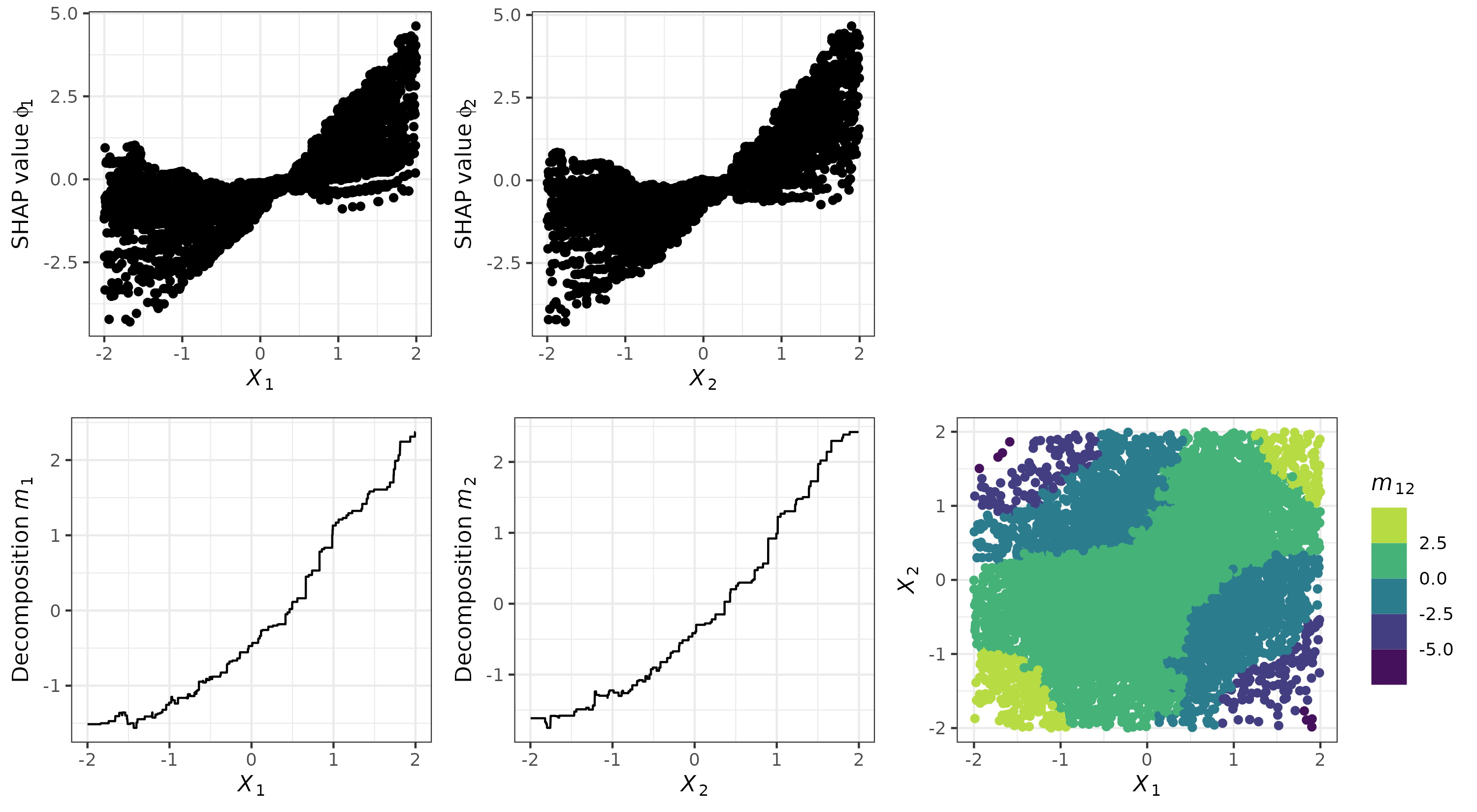} 
    \caption{Simple example. Given   an \textit{xgboost} estimator $\hat m$  estimating the true function $m(x_1,x_2)=x_1+x_2 + 2x_1x_2$, we calculate SHAP values (top row) and functional decomposition (bottom row).} 
    \label{fig:example}
\end{figure}

Those familiar with SHAP values may argue that one can detect the non-zero impact of $x_1$ by plotting $\phi_1$ over all instances (see Figure~\ref{fig:example}). This argument has two problems.
Firstly, this does not change the misleading local value. Secondly, SHAP values can be quite arbitrary:
If two estimators $m$ and $\tilde m$ are equal on the support $\mathrm{supp}(X_1,X_2)$, the corresponding SHAP values at $x\in\mathrm{supp}(X_1,X_2)$ are generally not equal. This is because SHAP values are constructed by extrapolating outside the support of the data. \cite{slack2020fooling} has empirically demonstrated how this phenomenon can be exploited to hide the importance of protected features.
One could ask for local explanations that do not extrapolate, hoping that this solves the problem. Unfortunately, this is not possible: If explanations are deduced only from the region with data support, those explanations are based on the correlation structure of the features \citep{janzing2020feature}. In particular a feature that has zero effect on the model output can still be assigned a value stemming from a correlated feature \citep{janzing2020feature,sundararajan2020many}. 
We conclude: 
\begin{quote}
    \emph{Local  explanations that do not explicitly specify all interactions cannot lead to meaningful interpretations in the presence of correlated features.}
\end{quote}
This is important to remember, noting that interpretation tools are usually used for black-box algorithms with the main purpose being to explain the model well in cases where interactions are present.
Intuitively speaking:
\begin{quote}
    \emph{
A local interpretation that explicitly considers all interactions is a global interpretation.}
\end{quote} 
Hence the goal of this paper is to unify local and global explanations.
We emphasize again that in contrast to simple SHAP or $l$-interaction SHAP $(l<q)$,  $q$-interaction SHAP provides a global explanation of a trained model.

\subsection{Related Work}

A functional decomposition for global interpretation of regression functions was introduced in the statistical literature in \cite{stone1994use}, and has been further discussed in \cite{hooker2007generalized,chastaing2012generalized,lengerich2020purifying}. These authors considered a different constraint called (generalized) functional ANOVA decomposition. In contrast, the constraint we introduce in this paper is linked to Shapley values. There is considerable literature on interactions and Shapley values. In cooperative game theory, pairwise player-player interactions were first considered by \cite{owen1972multilinear} and later generalized to higher-order interactions by \cite{grabisch1999axiomatic}. In the machine learning context, arbitrariness of Shapley values due to interactions and correlations has been discussed in
\cite{kumar2020problems}, \cite{slack2020fooling}, \cite{sundararajan2020many},  and possible solutions have been proposed in \cite{zhang2020interpreting}, \cite{kumar2021shapley}, \cite{harris2022joint}, \cite{ittner2021feature}, \cite{sundararajan2020shapley}. Recently, \cite{tsai2022faith} introduced interaction SHAP for any given order and proposed an approximation to calculate them. In this paper, we introduce an identification constraint for a functional decomposition which connects to partial dependent plots \citep{friedman2001greedy} and Shapley values with a value function that has recently been coined interventional SHAP \citep{chen2020true}.  Alternative value functions have been discussed in \cite{frye2020asymmetric}, \cite{yeh2022threading}. There are a variety of methods to obtain single-value global feature importance measures implied by SHAP. These include \cite{casalicchio2018visualizing}, \cite{frye2020asymmetric} and \cite{williamson2020efficient}, among others. Similar to our method suggested in Section \ref{sec:vim}, these measures are weighted averages of local importance values. However, in contrast to our suggestion, most are motivated by additive importance measures \citep{covert2020understanding}. 

Lastly, after the completion and upload of a first preprint of this paper, two related and relevant works were published.
\cite{bordt2022shapley} show that for every SHAP-value function there exists a one-to-one correspondence between SHAP values and  an identification in a  functional decomposition. But they do not provide explicit solutions.
 \cite{herren2022statistical}  describe an identification constraint that connects to observational SHAP.
Our present paper finds an identification constraint that gives a one-to-one correspondence to interventional SHAP and provides a fast implementation for tree-based methods.

\section{MAIN RESULTS}
	Let $(Y_i, X_{i,1},\dots,X_{i,d})$ be a data set of $n$ i.i.d. observations with $X_{i,k}\in\mathbb{R}$, $i=1,\dots,n$; $k=1,\dots,d$. We consider the supervised learning setting
	\[
	E[Y_i|X_i=x] = m(x),
	\]
where the function $m$ is of interest and $Y$ is a real valued random variable.\footnote{We use $Y_i \in \mathbb{R}$ for notational  convenience. It is straight-forward to extend to binary classification, whereas multiclass classification would require a slightly different procedure.} We assume that a reasonable estimator $\hat m$ of $m$ has been provided. 

\subsection{Global Interpretation}
With increasing dimension it can quickly get very hard, if not impossible,  to visualize and thereby comprehend a multivariate function.
Hence, a global interpretation of $\hat m$ is arguably only feasible if it is a composition of low-dimensional structures.
Let us consider a specific  decomposition of a multivariate function into a sum of main effects, bivariate interactions, etc., up to a $d$-variate interaction term.
\begin{align}\label{anova}
	\hat m(x)&= \hat m_0+\sum_{k=1}^d \hat m_k(x_{k}) \notag \\ &\quad + \sum_{k<l} \hat m_{kl}(x_{k},x_{l}) +\cdots
 + \hat m_{1,\dots,d}(x) \notag  \\&= \sum_{S\subseteq \{1,\dots,d\}} \hat m_S(x_S).
	\end{align}
The heuristic of the decomposition is that if the underlying function $m(x)$ only lives on low-dimensional structures, then  $m_S$ should be zero for most feature subsets $S$ and
the order of maximal interaction $q=\max\{ \abs{S}: m_S\neq 0\}$ should be much smaller than the number of features: $q << d$.  This discussion, however, is not very meaningful before one has agreed on an identification; without suitable identification constraints, it is possible to change components on the right without altering the left hand side.
We propose the following identification which we see as reasonable in its own right, but also connects interventional SHAP values \citep{chen2020true, janzing2020feature},
partial dependence plots \citep{friedman2001greedy} and de-biasing, as will be explained in the next three subsections.


\textbf{Marginal identification:} For every $S\subseteq \{1,\dots,d\}$,
\begin{align}\label{constraint1}
\sum_{T \cap S \neq \emptyset} \int \hat m_T(x_T) \hat p_S(x_S)\mathrm dx_S=0,
\end{align}
where $\hat p_S$ is some estimator of the density $p_S$ of $X_S$.

\begin{lemma}\label{constraint2}
The marginal identification \eqref{constraint1} is equivalent to
\begin{align*}
\sum_T \int  \hat m_T(x_T)\hat p_S(x_S)\mathrm dx_S &=\sum_{T \cap S = \emptyset} \hat m_T(x_T).
\end{align*}
for every $S\subseteq \{1,\dots,d\}$.
\end{lemma}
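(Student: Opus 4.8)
The plan is to prove the two constraint families equivalent \emph{for each fixed $S$ separately}, by splitting the unrestricted sum over $T$ in the reformulation of the Lemma according to whether $T$ meets $S$ or not.

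Fix $S\subseteq\{1,\dots,d\}$. First I would split
\begin{align*}
\sum_T \int \hat m_T(x_T)\,\hat p_S(x_S)\,\mathrm dx_S
&= \sum_{T\cap S=\emptyset} \int \hat m_T(x_T)\,\hat p_S(x_S)\,\mathrm dx_S\\
&\quad + \sum_{T\cap S\neq\emptyset} \int \hat m_T(x_T)\,\hat p_S(x_S)\,\mathrm dx_S .
\end{align*}
For $T$ with $T\cap S=\emptyset$, the factor $\hat m_T(x_T)$ does not depend on the integration variables $x_S$, so it leaves the integral, and since $\hat p_S$ is a probability density, $\int \hat p_S(x_S)\,\mathrm dx_S=1$; hence the first sum collapses to $\sum_{T\cap S=\emptyset}\hat m_T(x_T)$. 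Substituting this back, the reformulated identity of the Lemma holds for this $S$ if and only if the second sum vanishes, i.e.\ $\sum_{T\cap S\neq\emptyset}\int \hat m_T(x_T)\,\hat p_S(x_S)\,\mathrm dx_S=0$, which is exactly the marginal identification \eqref{constraint1} for the same $S$. Since this chain of equivalences is carried out for each $S$ independently, the two families of constraints coincide.

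I do not anticipate a genuine obstacle: the argument is essentially bookkeeping. The only facts used are that $\hat p_S$ integrates to one — so integrating against it acts as the identity on functions that are constant in $x_S$ — and the elementary partition of the subsets $T$ into those disjoint from $S$ and those meeting $S$. If one wants to be careful about well-definedness of the remaining terms, one can note via Fubini that $\int \hat m_T(x_T)\,\hat p_S(x_S)\,\mathrm dx_S = \int \hat m_T(x_{T\cap S},x_{T\setminus S})\,\hat p_{T\cap S}(x_{T\cap S})\,\mathrm dx_{T\cap S}$, which is a function of $x_{T\setminus S}$ only, but this refinement is not needed for the equivalence statement itself.
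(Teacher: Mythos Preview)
Your argument is correct and matches the paper's own proof essentially line for line: both split the sum over $T$ according to whether $T\cap S=\emptyset$, use that $\hat p_S$ integrates to one to collapse the disjoint part, and conclude the equivalence for each fixed $S$. The Fubini remark at the end is extra detail not in the paper but does no harm.
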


The next theorem states existence and uniqueness of a decomposition that satisfies the  identification constraint \eqref{constraint1} and describes the solution explicitly.

\begin{theo} \label{lem2}
Given any initial estimator $\hat m^{(0)}=\{\hat m^{(0)}_S | S\subseteq \{1,\dots,d\}\}$, 
there exists exactly one set of functions $\hat m^\ast=\{\hat m^\ast_S | S\subseteq \{1,\dots,d\}\}$ satisfying constraint \eqref{constraint1} with $\hat{m}_{\mathrm{Sum}}(x):=\sum_S \hat m^{(0)}_S(x_S) = \sum_S \hat m^\ast_S(x_S)$.
The functions are given by
\begin{align*}
\hat m^\ast_S(x_S)
&=
\sum_{V \subseteq S} (-1)^{\abs{S\setminus V}}\int \hat{m}_{\mathrm{Sum}}(x) \hat p_{-V}(x_{-V})\mathrm dx_{-V}.
\end{align*}
In particular $\hat m_S^\ast$ does not depend on the particular identification of $\hat m^{(0)}$.
\end{theo}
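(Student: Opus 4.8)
The plan is to recognise the claimed formula as Möbius inversion on the Boolean lattice of subsets of $D:=\{1,\dots,d\}$. For $V\subseteq D$ abbreviate the nested marginal
\[
\mu_V(x_V):=\int \hat{m}_{\mathrm{Sum}}(x)\,\hat p_{-V}(x_{-V})\,\mathrm dx_{-V},
\]
so that the assertion is that $\hat m^\ast_S:=\sum_{V\subseteq S}(-1)^{\abs{S\setminus V}}\mu_V$ is the unique admissible decomposition. Two trivial identities drive everything: $\mu_D=\hat{m}_{\mathrm{Sum}}$, and, for any $S$, taking $V=-S$ gives $\mu_{-S}(x_{-S})=\int\hat{m}_{\mathrm{Sum}}(x)\hat p_S(x_S)\,\mathrm dx_S$.

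First I would run the inversion step: defining $\hat m^\ast_S$ by the formula and summing over $V\subseteq U$, the standard fact that $\sum_{R\subseteq A}(-1)^{\abs R}$ vanishes unless $A=\emptyset$ collapses the resulting double sum to $\sum_{V\subseteq U}\hat m^\ast_V=\mu_U$ for every $U\subseteq D$. Taking $U=D$ yields $\sum_S\hat m^\ast_S=\mu_D=\hat{m}_{\mathrm{Sum}}$, so the proposed components reproduce $\hat{m}_{\mathrm{Sum}}$; this settles the representation half of existence.

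Next I would verify constraint \eqref{constraint1}. Fixing $S$, I split the sum over $\{T:T\cap S\neq\emptyset\}$ as (the sum over all $T$) minus (the sum over $T$ with $T\cap S=\emptyset$). Because $\hat p_S$ integrates to one and $\hat m^\ast_T$ is free of $x_S$ when $T\cap S=\emptyset$, the subtracted piece equals $\sum_{T\subseteq -S}\hat m^\ast_T=\mu_{-S}$ by the inversion identity with $U=-S$, while the first piece is $\int\bigl(\sum_T\hat m^\ast_T\bigr)\hat p_S\,\mathrm dx_S=\int\hat{m}_{\mathrm{Sum}}\hat p_S\,\mathrm dx_S=\mu_{-S}$; the two cancel, which is precisely \eqref{constraint1}. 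Equivalently, one may first pass to the reformulation in Lemma~\ref{constraint2} and merely check that both of its sides equal $\mu_{-S}$.

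For uniqueness I would use that \eqref{constraint1} is linear and homogeneous: two admissible decompositions would have a componentwise difference $\{\delta_T\}$ satisfying \eqref{constraint1} with $\sum_T\delta_T=0$. Substituting $\sum_T\delta_T=0$ into the Lemma~\ref{constraint2} form gives $0=\int\bigl(\sum_T\delta_T\bigr)\hat p_S\,\mathrm dx_S=\sum_{T\cap S=\emptyset}\delta_T$, i.e.\ $\sum_{T\subseteq U}\delta_T=0$ for every $U\subseteq D$; a one-line induction on $\abs U$ (or Möbius inversion again) forces each $\delta_U\equiv0$. The closing sentence of the theorem is then immediate, since the formula involves $\hat m^{(0)}$ only through $\hat{m}_{\mathrm{Sum}}$, which is invariant under re-identification. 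I do not expect a real obstacle: the content is bookkeeping, and the only place demanding care is keeping the complementation $S\leftrightarrow -S$ straight (the constraint is indexed by the integrated-out set, the components by the surviving variables) together with the implicit integrability of $\hat{m}_{\mathrm{Sum}}$ against the marginals $\hat p_{-V}$, which legitimises the $\mu_V$ and all interchanges of sum and integral.
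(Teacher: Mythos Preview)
Your proof is correct and follows essentially the same M\"obius-inversion strategy as the paper: the key identity $\sum_{V\subseteq U}\hat m^\ast_V=\mu_U$ is the paper's Lemma~\ref{mobius}, and both your verification of constraint~\eqref{constraint1} (splitting into $T\cap S=\emptyset$ versus $T\cap S\neq\emptyset$) and your uniqueness argument via Lemma~\ref{constraint2} mirror the proof of Corollary~\ref{thm1}. The only organisational difference is that the paper first shows the Theorem~\ref{lem2} formula coincides with the Corollary~\ref{thm1} formula and then proves the latter satisfies the constraint, whereas you argue directly from the Theorem~\ref{lem2} formula.
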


\begin{remark}
Theorem \ref{lem2}, given Lemma \ref{constraint2}, is a special case of a more general result in combinatorics  where the solution is known as Möbius inverse \citep{rota1964foundations}. In Cooperative game theory the result is also known as Harsanyi dividend  \citep{harsanyi1982simplified}.
\end{remark}

The next corollary provides a practically more useful identification of the solution than Theorem \ref{lem2}.

\begin{corollar}\label{thm1}
The solution $\hat m^\ast_S$ described in Theorem \ref{lem2} can be re-written as
\begin{align}\label{sol}
\hat m^\ast_S (x_S)=  \sum_{T \supseteq S} \sum_{T\setminus S \subseteq U\subseteq  T}&(-1)^{\abs{S}-\abs{T\setminus U}} \\  &  \times \int \hat m^{(0)}_T(x_T) \hat p_U(x_U)\mathrm dx_U. \notag
\end{align}
\end{corollar}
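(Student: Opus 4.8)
The plan is to substitute the definition $\hat m_{\mathrm{Sum}}(x)=\sum_{T\subseteq\{1,\dots,d\}}\hat m^{(0)}_T(x_T)$ into the closed form of Theorem~\ref{lem2} and then reorganize the resulting double sum; uniqueness is not needed, since Theorem~\ref{lem2} already hands us the solution and \eqref{sol} is merely a rewriting of it. Plugging in and exchanging the two finite sums gives
\[
\hat m^\ast_S(x_S)=\sum_{T\subseteq\{1,\dots,d\}}\ \sum_{V\subseteq S}(-1)^{\abs{S\setminus V}}\int \hat m^{(0)}_T(x_T)\,\hat p_{-V}(x_{-V})\,\mathrm dx_{-V}.
\]
The first step is to simplify the inner integral: since $\hat m^{(0)}_T$ depends on $x$ only through $x_T$, integrating it against $\hat p_{-V}$ over $x_{-V}$ integrates out only the coordinates indexed by $T\setminus V$, and by consistency of the marginal density estimators this equals $\int \hat m^{(0)}_T(x_T)\,\hat p_{T\setminus V}(x_{T\setminus V})\,\mathrm dx_{T\setminus V}$, a function of $x_{T\cap V}$ alone. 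In particular this quantity depends on $V$ only through $W:=V\cap T$.

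The second step, which is the heart of the argument, is the combinatorial reorganization. Fixing $T$, decompose each $V\subseteq S$ as $V=W\cup V'$ with $W=V\cap T\subseteq S\cap T$ and $V'=V\setminus T\subseteq S\setminus T$ (a disjoint union); then $\abs{S\setminus V}=\abs{S}-\abs{W}-\abs{V'}$ and $T\setminus V=T\setminus W$, so the inner integral depends only on $W$. Summing the sign over all $V'\subseteq S\setminus T$ produces the factor $\sum_{V'\subseteq S\setminus T}(-1)^{\abs{V'}}$, which equals $1$ if $S\subseteq T$ and $0$ otherwise. This collapses the outer sum to $T\supseteq S$; for such $T$ one has $S\cap T=S$, so $W$ ranges over all subsets of $S$ and the surviving inner coefficient is $(-1)^{\abs{S}-\abs{W}}$.

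The final step is a change of summation index. For $T\supseteq S$ and $W\subseteq S$, set $U:=T\setminus W=T\setminus V$; as $W$ runs over subsets of $S$, $U$ runs over exactly the sets with $T\setminus S\subseteq U\subseteq T$, and $\abs{W}=\abs{T\setminus U}$, so the sign becomes $(-1)^{\abs{S}-\abs{T\setminus U}}$ while the inner integral is $\int \hat m^{(0)}_T(x_T)\,\hat p_U(x_U)\,\mathrm dx_U$. Substituting back yields exactly \eqref{sol}. I expect the only genuinely delicate points to be the density bookkeeping in the first step — this is where consistency of the estimators $\hat p_S$ enters — and keeping the in-$T$ versus out-of-$T$ split of $V$ straight; once the vanishing alternating sum over $V'$ is spotted, everything else is routine index manipulation.
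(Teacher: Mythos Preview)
Your argument is correct and is precisely the computation the paper carries out: split each $V\subseteq S$ into its $T$-part $W=V\cap T$ and its $T^c$-part $V'=V\setminus T$, collapse the sum over $V'$ via the alternating-sum identity to force $T\supseteq S$, and then substitute $U=T\setminus W$. The only difference is organizational: the paper places this rewriting calculation under the proof of Theorem~\ref{lem2} (deriving the theorem's formula from \eqref{sol}), and proves Corollary~\ref{thm1} the other way around, by checking via Lemma~\ref{mobius} that \eqref{sol} satisfies constraint~\eqref{constraint1} and then establishing uniqueness; either direction works once the two expressions are shown equal, and your careful remark about consistency of the marginal density estimators makes explicit a step the paper leaves implicit.
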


\begin{remark}
    In principle, Theorem \ref{lem2} is a special case of Corollary \ref{thm1}, where we  consider $\hat{m}^{(0)}_{\{1,\dots,d\}}=\hat{m}_{\mathrm{Sum}}$ with $\hat{m}^{(0)}_S=0$ for $S\neq \{1,\dots,d\}$ as an initial estimator. Theorem \ref{lem2} has a simpler notation. The following example shows the benefits of the Corollary: Assume $\hat{m}$ is generated by xgboost with 2 trees with depth 2. For simplicity, the first tree is split in dimensions $\{1,2\}$, the second in $\{3,4,5\}$. We write $\hat{m}(x)=\hat{m}_{t_1}(x)+\hat{m}_{t_2}(x)$, where $\hat{m}_{t_i}$ comes from tree $i$. We can use Corollary \ref{thm1} by choosing either (a) $\hat{m}^{(0)}_{1,\dots,5}(x)=\hat{m}(x)$ or (b) $\hat{m}^{(0)}_{1,2}(x)=\hat{m}_{t_1}(x), \hat{m}^{(0)}_{3,4,5}(x)=\hat{m}_{t_2}(x)$. The resulting $\hat{m}^*$ is the same since it does not depend on the specific choice of $\hat{m}^{(0)}$ as long as $\hat{m}(x)=\sum_{S}m_S^{(0)}$. Choice (a) has 3 disadvantages compared to choice (b): First, with (a) we obtain $\hat{m}^*$ by calculating $2^5-1=31$ marginals. With (b) we only need to calculate $2^2-1+2^3-1=10$. Second, with (a) we must construct $\hat{p}_S$ with $|S|$ taking values up to 5. With (b) we only require $\hat{p}_S$ for $|S|\leq 3$. Third, with (a) we need to calculate up to 5-dimensional integrals. With (b), only up to 3-dimensional integrals must be calculated. For xgboost and random planted forest, the estimator $\hat{m}$ is the sum of low dimensional functions. For estimators without this property, calculating the entire decomposition is computationally infeasible for high dimensional input.
\end{remark}

\begin{example}
Consider the setting of our simple example (Subsection \ref{sec:example}), $m(x_1,x_2)=x_1+x_2 + 2x_1x_2 $, with  $X_1$ and $X_2$ having mean
zero and variance one. If $m(x)=m^\ast(x)$ and 
$m^\ast(x)$ satisfies the population version of the marginal identification \eqref{constraint1}, then 
\begin{align*}
    m^\ast(x_1,x_2)=m^\ast_0 + m_1^\ast(x_1) + m_2^\ast(x_2) + m_{12}^\ast(x_1,x_2),
\end{align*}
with
\begin{align*}
    m^\ast_0&=2corr(X_1,X_2) \\
    m_1^\ast(x_1)&= x_1 -2corr(X_1,X_2) \\
    m_2^\ast(x_2)&=x_2 - 2corr(X_1,X_2) \\
    m_{12}^\ast(x_1,x_2)&= 2x_1x_2 + 2corr(X_1,X_2).
\end{align*}
\end{example}

\subsection{Describing Interventional SHAP in Terms of our Global Explanation}

We now show that there is a direct connection between the global explanation described in the previous section and interventional SHAP values. 
In particular, this connection describes interventional SHAP values uniquely without the use of game theoretically motivated Shapley axioms or a formula 
running through  permutations, where the number of summands grows exponentially with $d$, see formula \eqref{eq:shap2} in Appendix~\ref{sec:axioms}.
Fix a value $x_0 \in \mathbb R^d$. A local approximation at $x_0$ of the function $\hat m$ is given by
\begin{align}\label{eq:shap}
\hat m\left(x_0\right)=\phi_{0}+\sum_{k=1}^{d} \phi_k(x_0),
\end{align}
for constants  $\phi_0,\phi_1(x_0),\dots,\phi_d(x_0)$. Similar to the case of global explanations, the right hand-side is not identified.
Local explanations add constraints to equation \eqref{eq:shap} such that $\phi_k(x_0)$ is uniquely identified and best reflects the local contribution of feature $k$ to $\hat m\left(x_0\right)$.
Note that the explanation is not global because the explanation for feature $k$ depends on the value of all features $x_0=x_{0,1},\dots,x_{0,d}$ and not on $x_{0,k}$ only.

%
%
%
%
%
%


%

In what follows, we  consider the identification leading to interventional SHAP values \citep{chen2020true, janzing2020feature}, i.e., Shapley values with  value function 
\begin{align}\label{value}
v_{x_0}(S)=
\int \hat m(x)  \hat p_{-S}(x_{-S}) d x_{-S}\rvert_{x=x_0}.
\end{align}
See Appendix~\ref{sec:shapley} for a definition of Shapley values.

\begin{corollar}\label{cor:mobius}
If $\hat m$ is decomposed such that \eqref{constraint1} is fulfilled, then the interventional SHAP values are weighted averages of the corresponding components, where an interaction component is equally split to all involved features:
\[
\phi_k(x)= \hat m^{\ast}_k(x_k)+ \frac 1 2 \sum_j  \hat m^{\ast}_{kj}(x_{kj}) + \cdots + \frac 1 d \hat m^{\ast}_{1,\dots,d}(x_{1,\dots,d}).
\]
\end{corollar}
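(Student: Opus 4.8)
The plan is to rewrite the interventional value function \eqref{value} in terms of the identified components $\hat m^\ast_T$, substitute the result into the Shapley formula, and then collapse the combinatorial weights by a hockey-stick identity.

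\emph{Step 1: reduce the value function.} Since $\hat m = \hat m_{\mathrm{Sum}} = \sum_T \hat m^\ast_T$, I would first set $g_V(x_V) := \int \hat m(x)\,\hat p_{-V}(x_{-V})\,\mathrm dx_{-V}$ for each $V\subseteq\{1,\dots,d\}$, so that by definition $v_{x_0}(S) = g_S(x_{0,S})$. Theorem~\ref{lem2}, read with $\hat m$ itself playing the role of the initial estimator, is exactly the statement $\hat m^\ast_S = \sum_{V\subseteq S}(-1)^{\abs{S\setminus V}} g_V$. This is Möbius inversion on the Boolean lattice \citep{rota1964foundations}, whose inverse reads $g_S = \sum_{V\subseteq S}\hat m^\ast_V$; alternatively, the same identity follows by inserting $\hat m=\sum_T\hat m^\ast_T$ into $g_S$ and using Lemma~\ref{constraint2} to see that every component $\hat m^\ast_T$ with $T\cap S\neq\emptyset$ and $T\not\subseteq S$ integrates away. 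Either route yields
\[
v_{x_0}(S)=\sum_{V\subseteq S}\hat m^\ast_V(x_{0,V}).
\]

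\emph{Step 2: substitute into the Shapley formula and swap sums.} Starting from the permutation-average form $\phi_k(x_0)=\sum_{S\subseteq\{1,\dots,d\}\setminus\{k\}}\frac{\abs S!\,(d-\abs S-1)!}{d!}\bigl(v_{x_0}(S\cup\{k\})-v_{x_0}(S)\bigr)$, I would note that the marginal contribution telescopes, $v_{x_0}(S\cup\{k\})-v_{x_0}(S)=\sum_{W\subseteq S}\hat m^\ast_{W\cup\{k\}}(x_{0,W\cup\{k\}})$, because a subset of $S\cup\{k\}$ that is not a subset of $S$ is precisely one of the form $W\cup\{k\}$ with $W\subseteq S$. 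Interchanging the two sums, the coefficient of a fixed component $\hat m^\ast_{W\cup\{k\}}$ (with $k\notin W$, $\abs W=w$) becomes $\sum_{s=w}^{d-1}\binom{d-1-w}{s-w}\frac{s!\,(d-s-1)!}{d!}$, the binomial counting the sets $S$ with $W\subseteq S\subseteq\{1,\dots,d\}\setminus\{k\}$ and $\abs S=s$.

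\emph{Step 3: evaluate the weight.} Substituting $j=s-w$ and cancelling factorials rewrites this sum as $\frac{w!\,(d-1-w)!}{d!}\sum_{j=0}^{d-1-w}\binom{w+j}{w}$, and the hockey-stick identity $\sum_{j=0}^{d-1-w}\binom{w+j}{w}=\binom{d}{w+1}$ turns it into $\frac{w!\,(d-1-w)!}{d!}\binom{d}{w+1}=\frac{1}{w+1}=\frac{1}{\abs{W\cup\{k\}}}$. Hence $\phi_k(x_0)=\sum_{W\subseteq\{1,\dots,d\}\setminus\{k\}}\frac{1}{\abs W+1}\hat m^\ast_{W\cup\{k\}}(x_{0,W\cup\{k\}})$, which, regrouped by the value of $\abs W$, is exactly the asserted $\hat m^\ast_k(x_k)+\frac12\sum_j\hat m^\ast_{kj}(x_{kj})+\cdots+\frac1d\hat m^\ast_{1,\dots,d}$.

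\emph{Main obstacle.} The substantive step is Step~1: recognizing that the marginal identification \eqref{constraint1} forces $v_{x_0}(S)$ to collapse to the partial sum $\sum_{V\subseteq S}\hat m^\ast_V$ over subsets of $S$, which is where Lemma~\ref{constraint2} (equivalently Theorem~\ref{lem2}) does the real work. Steps~2 and~3 are then a routine re-indexing together with the hockey-stick identity. A minor point to state carefully is that one is entitled to run the Shapley construction on the ``projected'' value function $v_{x_0}$ rather than on $\hat m$ directly; this is immediate from the definition \eqref{value} but worth making explicit.
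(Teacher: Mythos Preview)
Your proof is correct, and it shares the paper's crucial first step but diverges afterwards. Both arguments begin by collapsing the value function to $v_{x_0}(S)=\sum_{V\subseteq S}\hat m^\ast_V(x_{0,V})$; in the paper this is Lemma~\ref{mobius}, and you recover it either by inverting Theorem~\ref{lem2} or via Lemma~\ref{constraint2}. From there the paper proceeds \emph{axiomatically}, following \cite{shapley1953value}: linearity splits the game into the unanimity games $U\mapsto \hat m^\ast_S(x_S)\,\mathbbm 1(S\subseteq U)$, and on each of those the dummy, symmetry and efficiency axioms immediately force $\phi_k=\mathbbm 1(k\in S)\,|S|^{-1}\hat m^\ast_S(x_S)$, with no combinatorics at all. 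You instead substitute into the explicit weighted-subset Shapley formula and reduce the resulting coefficient by the hockey-stick identity. Both routes are standard ways of evaluating Shapley values on unanimity games; the paper's is shorter and more conceptual, while yours is self-contained in that it does not invoke the equivalence between the axioms and the permutation formula, and it makes transparent exactly how the weight $1/|S|$ arises from the averaging.
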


\begin{remark}
A crucial point of Corollary~\ref{cor:mobius} is that the local SHAP values can be described by the components of a global explanation. 
The result is also intriguing since usually the contribution or importance of a single feature in a general global representation as in \eqref{anova} is a complicated interplay between various interactions, see Appendix~\ref{generalexpansion}. 
\end{remark}

\subsection{Describing Partial Dependence Plots in Terms of our Global Explanation}
Given an estimator $\hat m$ and a target subset $S\subset \{1,\dots,d\}$, the partial dependence plot \citep{friedman2001greedy}, $\xi_S$, is defined as
\[
\xi_S(x_S)= \int \hat m(x) \hat p_{-S}(x_{-S})\mathrm dx_{-S}.
\]
It is straight forward to verify that partial dependence plots are linked to a functional decomposition $\{\hat m^\ast_S\}$ identified via \eqref{constraint1} through
\[
\xi_S=\sum_{U \subseteq S} \hat m^{\ast}_U.
\]
In particular if $S$ is only one feature, i.e.,  $S=\{k\}$, we have
\[
\xi_k(x_k)= \hat m_0^{\ast} + \hat m_k^\ast(x_k).
\]

\subsection{A De-Biasing Application Stemming from our Global Explanation}\label{A causal application stemming from our global explanation}

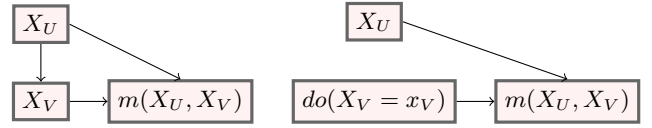
\begin{figure}
\begin{minipage}[t]{0.44\columnwidth}
\begin{tikzpicture}[
squarednode/.style={rectangle, draw=black!60, fill=red!5, very thick, minimum size=1mm,font=\small},node distance=5mm
]
\node[squarednode]      (U)                              {$X_U$};
\node[squarednode]      (V)       [below=of U] {$X_V$};
\node[squarednode]      (m)       [right=of V] {$m(X_U,X_V)$};

\draw[->] (U.south) -- (V.north);
\draw[->] (V.east) -- (m.west);
\draw[->] (U.east) -- (m.north);
\end{tikzpicture}
\end{minipage}
\begin{minipage}[t]{0.44\columnwidth}
\begin{tikzpicture}[
squarednode/.style={rectangle, draw=black!60, fill=red!5, very thick, minimum size=1mm,font=\small},node distance=5mm
]
\node[squarednode]      (U)                              {$X_U$};
\node[squarednode]      (V)       [below=of U] {$do(X_V=x_V)$};
\node[squarednode]      (m)       [right=of V] {$m(X_U,X_V)$};

\draw[->] (V.east) -- (m.west);
\draw[->] (U.east) -- (m.north);
\end{tikzpicture}
\end{minipage}
\caption{Left: Initial causal structure. Right: Causal structure after removing effect of  $X_U$ on $X_V$.}
\label{fig:graph}
\end{figure}

Assume $U$ is a set of features that should not have an effect on $\hat m$. For example 
$U=\{\text{gender, ethnicity}\}$ in the case of non-discriminatory regulation requirements. 
Assume $\{1,\dots, d\}$ is the disjoint union of $U$ and $V$ with a directed acyclic graph structure $X_U\rightarrow  X_V \rightarrow m$, $X_U\rightarrow m$; as illustrated in  Figure \ref{fig:graph}.
Eliminating the causal relationship between $X_V$ and $X_U$ can be achieved via the do-operator, $do(X_V=x_V)$, that removes all edges going into $X_V$, see Figure \ref{fig:graph}. The  function $E[m(X) |\  do(X_V=x_V))$ does not use information contained in $X_U$; neither directly nor indirectly; see also e.g. \cite{kusner2017counterfactual,lindholm2022discrimination}.
Under the assumed causal structure, standard calculations \citep{pearl2009causality} lead to
\[
E[m(X) |\ do(X_V=x_V)]= \int m(x) p_U(x_U) dx_U.
\]
 If $\hat m$ is identified via \eqref{constraint1}, then  the de-biased version $\tilde  m(x_{-U}):=\int \hat m(x) \hat p_U(x_U) dx_U$ can be extracted from $\hat m$ by dropping all components that include features in $U$:
\begin{align}
    \tilde  m(x_{-U})= \int \hat m(x) \hat p_U(x_U) dx_U=  \sum_{S \subseteq V} \hat m_S(x_S).\label{eq:feature removal}
\end{align}

\subsection{Feature Importance}\label{sec:vim}
The global interpretation also provides a new perspective on feature importance.
SHAP value feature importance for feature $k$ is usually given by an empirical version of
$E[\abs{\phi_k(x)}]$. By Corollary  \ref{cor:mobius}, 
\begin{align*}
E[\abs{\phi_k(x)}]=E\left[
\abs{\sum_{S: k \in S} \frac 1 {\abs S} \hat m^\ast_S(x_S)} \right].
\end{align*}

In this definition, contributions from various interactions and main effects can cancel each other out, which may not be desirable. An alternative is to consider
\begin{align*}
 E \left[
\sum_{S: k \in S} \frac 1 {\abs S} \abs{ \hat m^\ast_S(x_S)}, \right]
\end{align*}
or to extend the definition of feature importance to interactions by defining feature importance as 
 $E \left[\abs{ m^\ast_S(x_S)}\right]$,
for a set $S \subseteq \{1,\dots, d\}$.
\begin{example}
Going back to our simple example (Section \ref{sec:example}), where $m(x_1,x_2)=x_1+x_2 + 2x_1x_2 $, SHAP feature importance for feature $x_1$ is an empirical version of
\begin{align*}
&E\left[ \abs{X_1 - 2\textrm{corr}(X_1,X_2)- \frac 1 2 \{2X_1X_2 + 2\textrm{corr}(X_1,X_2)\}}\right] \\
&=E\left[ \abs{X_1 - X_1X_2 - \textrm{corr}(X_1,X_2) \}}\right],
\end{align*}
which merges main effect and interaction effect. Alternatively, one may consider
\begin{align*}
E&\left[ \abs{X_1 - 2\textrm{corr}(X_1,X_2)}+ \abs{  X_1X_2 + \textrm{corr}(X_1,X_2)\}}\right].
\end{align*}

\end{example}

\subsection{Remarks on our Main Results}

The marginal identification \eqref{constraint1} is new and the only identification that corresponds to partial dependence plots. Further, we are not aware of any other decompositions used in practice which allow for debiasing via post-hoc feature removal given the causal structure assumed in Subsection~\ref{A causal application stemming from our global explanation}.  Lastly, while the marginal identification corresponds to interventional SHAP,  it is possible to construct other identifications that correspond to Shapley values with other value functions; e.g. a functional ANOVA constraint leads to observational SHAP \citep{herren2022statistical}.
To the best of our knowledge, we are the first to construct (and provide open-source) an algorithm which provides a global explanation for xgboost by including all higher-order interactions.
In particular, all current implementations we are aware of only consider max. 2-way interactions (both in the case of interventional SHAP and observational SHAP). 

\section{EXPERIMENTS}\label{Experiments}
We apply our method to several real and simulated datasets to show that the functional decomposition provides additional insights compared to SHAP values and SHAP interaction values. First, we show on real data that a global explanation can provide a more comprehensive picture than a local explanation method. Second, we show on real and simulated data that the same holds for the feature importance measure proposed in Section~\ref{sec:vim}. Finally, we show that the functional decomposition allows post-hoc removal of features from a model, which can be used to reduce bias of prediction models. We performed all experiments with \textit{xgboost} and \textit{random planted forests}. The results with \textit{xgboost} are presented in Sections~\ref{sec:bike}-\ref{sec:debias} in the main paper, whereas the results with \textit{random planted forests} are in Sections~\ref{sec:bike_rpf}-\ref{sec:debias_rpf} in the Appendix.

\subsection{Global Explanations}\label{sec:bike}
As an example of a real data application, we apply our method to the \textit{bike sharing} data \citep{fanaee2014event}, predicting the number of rented bicycles per day, given seasonal and weather information. Figure~\ref{fig:bike} shows SHAP values, main effects, 2-way interactions and 3-way interactions of the features \textit{hour of the day} (hr, 0-24 full hours), \textit{Temperature} (temp, normalized to 0-1) and \textit{working day} (workingday, 0=no, 1=yes).

\begin{figure*}[ht]
    \centering
    \includegraphics[width=.8\linewidth]{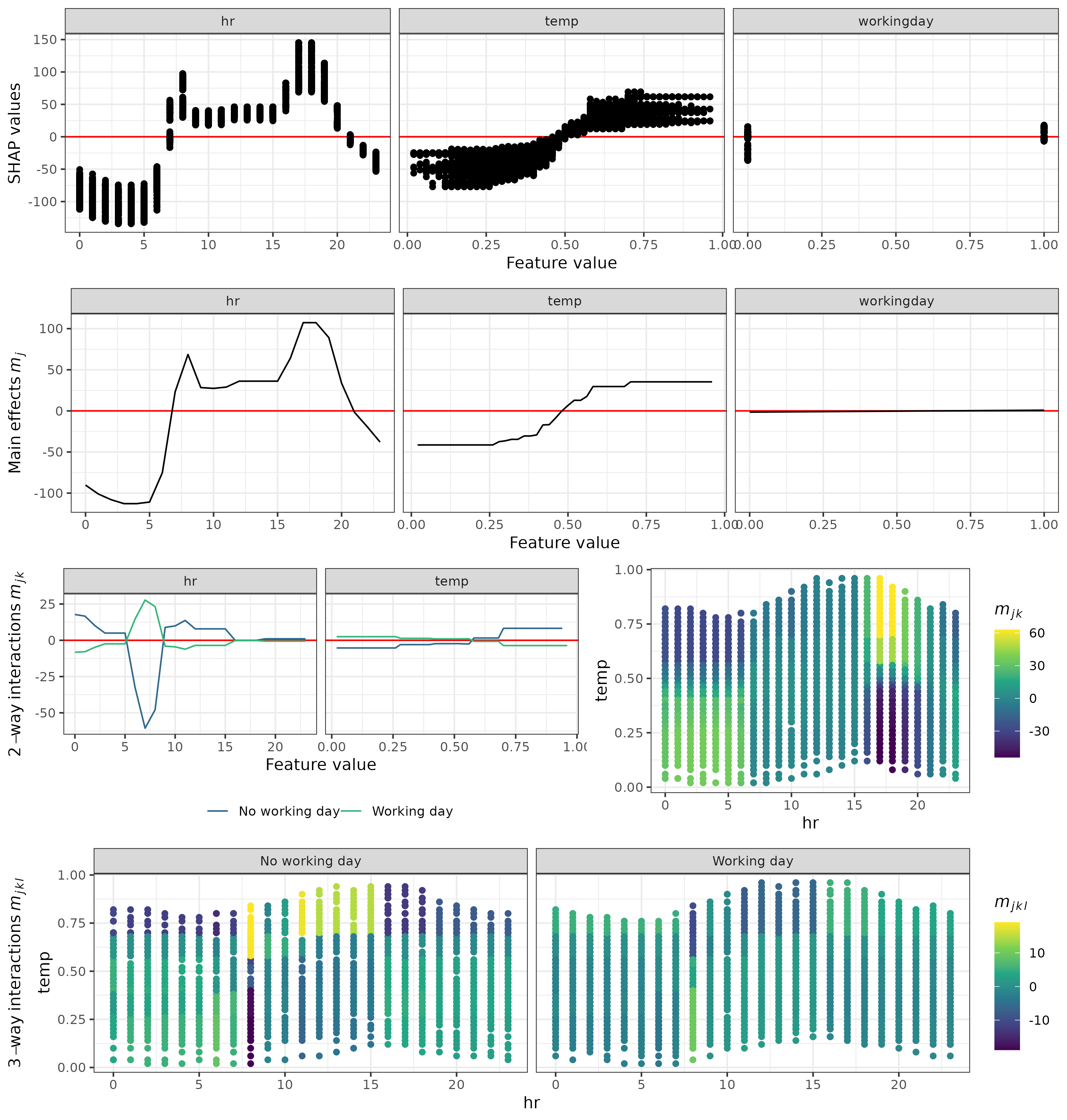}
    \caption{Bike sharing example (\textit{xgboost}). SHAP values (top row),  main effects (second row), 2-way interactions (third row) and 3-way interactions (bottom row) of the features \textit{hour of the day} (hr, 0-24 full hours), \textit{Temperature} (temp, normalized to 0-1) and \textit{working day} (workingday, 0=no, 1=yes) of the bike sharing data.} 
    \label{fig:bike}
\end{figure*}

In the top row, we see that different SHAP values are observed for the same values of the features and conclude that SHAP values are not sufficient to describe the features' effects on the outcome, due to interactions. In the second row, the main effects from the decomposition show a strong effect of the hour of the day: Many bikes are rented in the typical commute times in the morning and afternoon. We also see a positive effect of the temperature and no main effect of whether or not it is a working day. The 2-way interactions in the third row reveal strong interactions between the hour of the day and working day: On working days, more bikes are rented in the morning and less during the night and around noon. We also see that the temperature has a slightly higher effect on non working days and in the afternoon. In the bottom row, the 3-way interactions show that interactions between the hour of the day and the temperature are stronger on non working days than on working days. 

We conclude that the full functional decomposition provides a more comprehensive picture of the features' effects, compared to usual SHAP value interpretations and 2-way interaction SHAP, as e.g. proposed by \cite{lundberg2020local}. Note that, as described above, our methods do indeed provide the full picture, including all higher-order interactions, whereas Figure~\ref{fig:bike} only shows a subset of these interactions.  

\subsection{Feature Importance}\label{sec:expvim}
As described in Section~\ref{sec:vim}, the functional decomposition can also be used to calculate feature importance. Figure~\ref{fig:vim} shows the feature importance for the function $m(x) = x_1 + x_3 + x_2 x_3 - 2 x_2 x_3 x_4$ and the bike sharing data from Section~\ref{sec:bike} based on SHAP values and our functional decomposition. For the simple function, the SHAP feature importance identifies $x_1$ and $x_3$ as equally important and $x_2$ and $x_4$ as less important but it gives no information about interactions. On the other hand, the feature importance based on the functional decomposition shows that $x_1$ has a strong main effect but no interactions, whereas $x_2$ and $x_4$ have only interaction effects but no main effects and $x_3$ both kinds of effects. Similarly, on the bike sharing data, the hour of the day (feature \textit{hr}) and the temperature (\textit{temp}) have both main and interaction effects, whereas the feature \textit{working day} has 2-way interaction effects but no main effects (compare Figure~\ref{fig:bike}). Note that both definitions of feature importance are based on absolute values of SHAP values or components $m_S$ and thus are non-negative, in contrast to other methods of feature importance \citep{nembrini2018revival,casalicchio2018visualizing}. 

\begin{figure}[t]
    \centering
    \includegraphics[width=\linewidth]{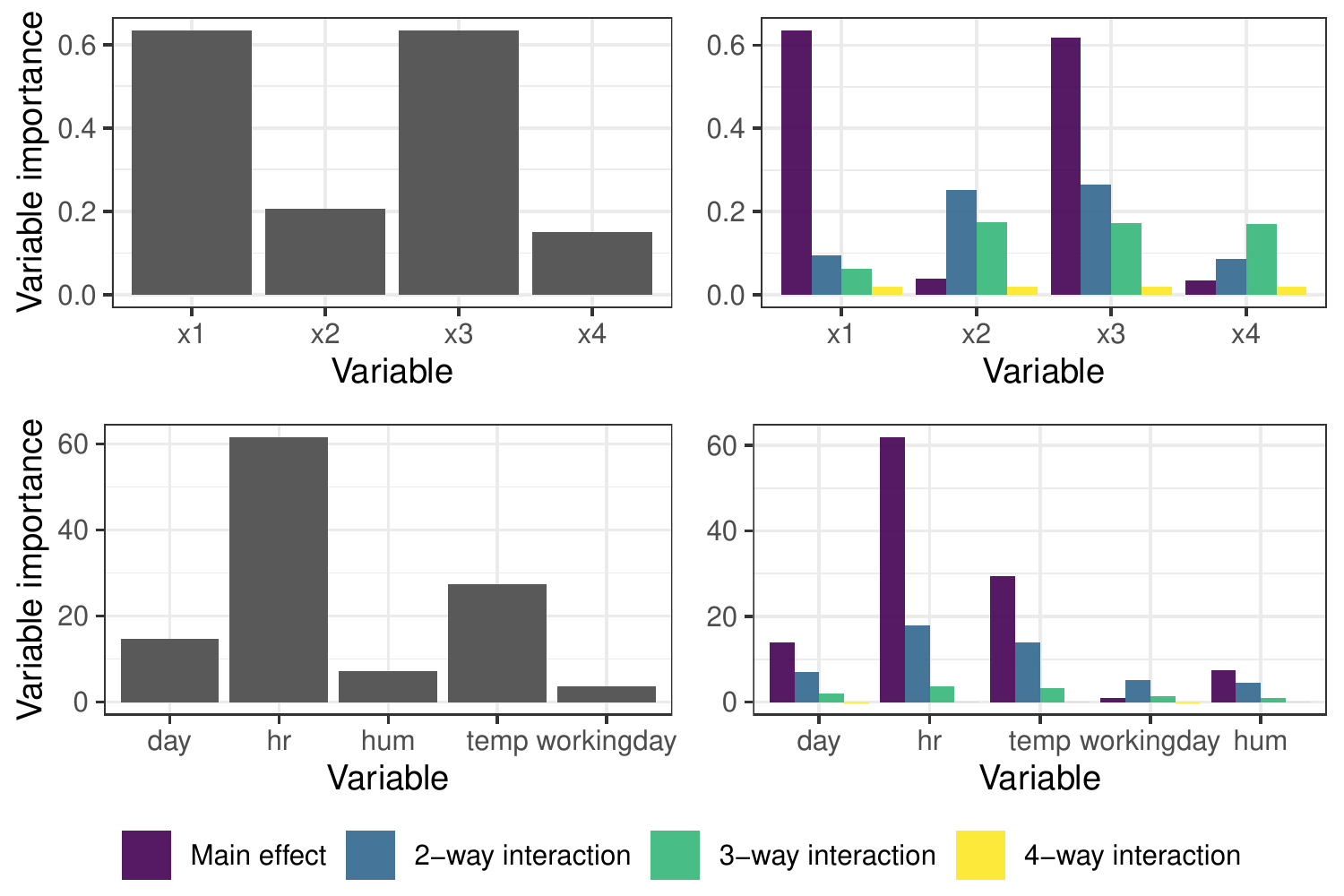}
    \caption{Feature importance (\textit{xgboost}) for the function $m(x) = x_1 + x_3 + x_2 x_3 - 2 x_2 x_3 x_4$ (top row) and the bike sharing data from Section~\ref{sec:bike} (bottom row)  based on SHAP values (left column) and our functional decomposition separately for main effects and interactions of different orders (right column). } 
    \label{fig:vim}
\end{figure}

\subsection{Post-hoc Feature Removal}\label{sec:debias}
We show that our method can be used to remove features and all their effects, including interactions, from a model \textit{post-hoc}, i.e., after model fitting. The idea is that in the setting of Subsection \ref{A causal application stemming from our global explanation}, using our decomposition, feature removal can be done by setting all components $m^*_S=0$ for all $S\subseteq\{1,\dots,d\}$ which include at least one component to be removed as in \eqref{eq:feature removal}.

We trained models on simulated data and the \textit{adult} dataset \citep{Dua:2019}. Both models contained a feature \textit{sex} or \textit{gender}, which is a protected attribute and should not have an effect in fair prediction models \citep{barocas-hardt-narayanan}. In the simulation, we considered the simplified scenario where we predict a person's salary, based on their sex and weekly working hours. We set the weekly working hours to an average of 40 for men and to 30 for women. Salary was simulated as 1 unit (e.g. thousand Euro per year) per weekly working hour and an additional 20 for males (see Figure~\ref{fig:graph}). Thus, men earn more for working longer hours (on average) and for being male per se. The first effect should be kept by a fair machine learning model, whereas the second effect is discriminating women. In the \textit{adult} data, we have the same features \textit{sex} and \textit{hours} but we do not know the causal structure. 

Figrue~\ref{fig:dediscr} shows the prediction for females and males of the full model, a refitted model without the protected feature \textit{sex} and a decomposed model where the feature \textit{sex} was removed post-hoc. In the simulated data, we see that refitting the model does not change the predictions at all: Because of the high correlation between \textit{sex} and \textit{hours}, the effects of \textit{sex} cannot be removed by not considering the feature in the model. Our decomposition, on the other hand, allows us to remove the (unwanted) direct effect of \textit{sex} while keeping the (wanted) indirect effect through \textit{hours}. On the \textit{adult} data, we see a similar difference, but less pronounced.

\begin{figure}[t]
    \centering
    \includegraphics[width=\linewidth]{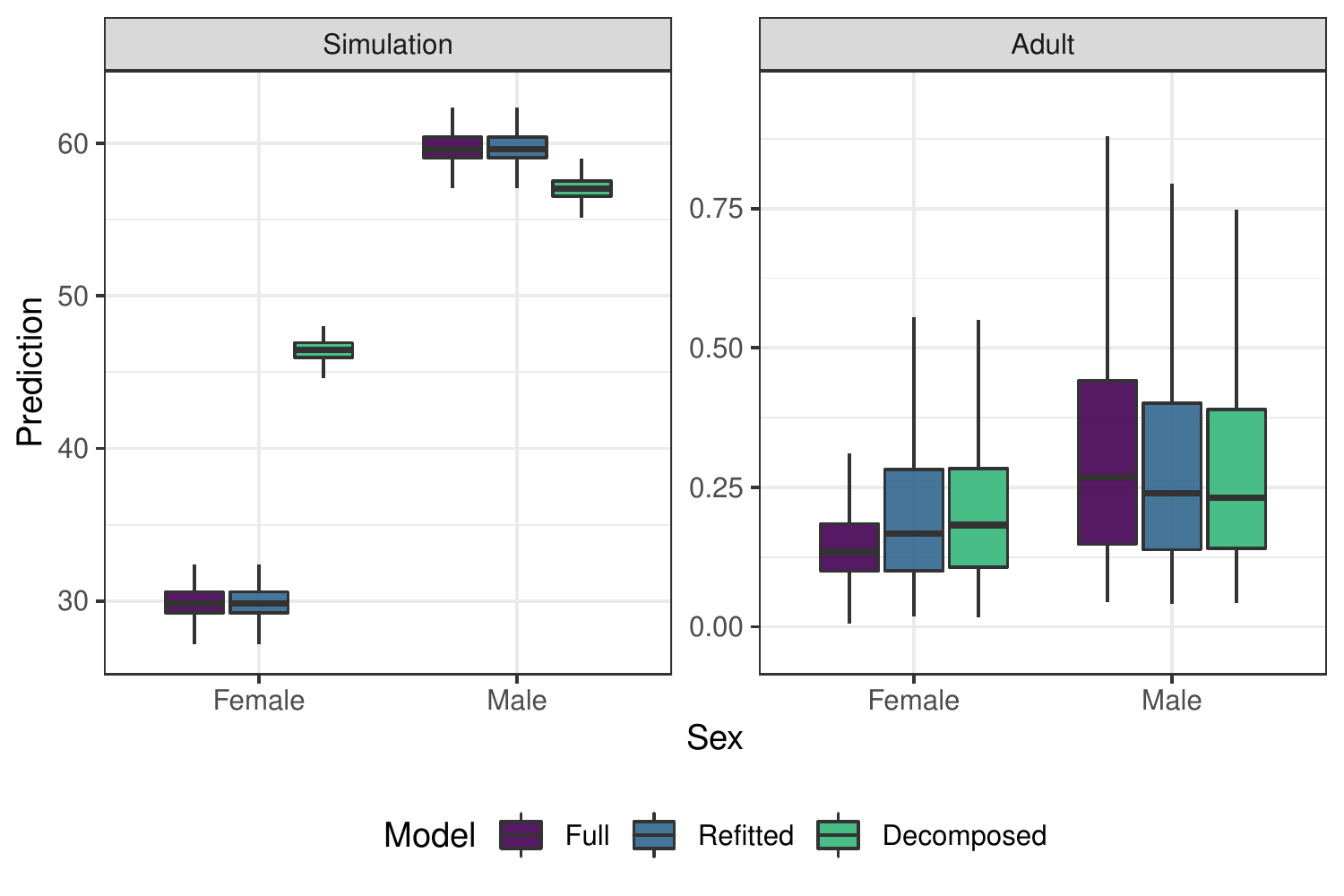}
    \begin{tabular}{lrrr}
    \toprule
    Setting & \multicolumn{3}{c}{Median difference} \\ 
    & Full & Refitted & Decomposed \\
    \midrule
    Simulation & 29.79 & 29.79 & 10.57 \\
    Adult & 0.13 & 0.07 & 0.05 \\ 
    \bottomrule
    \end{tabular}   
    \caption{Post-hoc feature removal (\textit{xgboost}). Predictions in a simulation (left) and the \textit{adult} dataset for males and females of the full model, a refitted model without the protected feature \textit{sex} and a decomposed model where the feature \textit{sex} was removed post-hoc. The table below shows the median differences between females and males for the three models.}
    \label{fig:dediscr}
\end{figure}

\section{CONCLUDING REMARKS AND LIMITATIONS}
In this paper, we have introduced a way to turn local Shapley values into a global explanation using a functional decomposition.
The explanation has a de-biasing application under the DAG structure given in Figure \ref{fig:graph}. This causal structure might be quite realistic in many fairness considerations, but the true causal structure is generally unknown. In this respect, it would be interesting to look into other causal structures that could motivate different identification constraints, which may connect to other local explanations than interventional SHAP.
Indeed, \cite{bordt2022shapley} show that every SHAP-value function corresponds to a specific identification.
Also, while our suggestions for feature importance measures paint a more precise picture in many cases, it is not directly motivated by a theoretical constraint, such as usual additive importance measures. It will require more research to back these ideas by theory. Another point not considered in this paper is the difference between the estimate $\hat m$ and a potential true function $m$. In particular, it is not clear if a method that estimates $m$ well is also a good estimator for a selection of components {$m_S$}. This discussion is related to work done in double/debiased machine learning \citep{chernozhukov2018double} and more general in semiparametric statistics, see e.g. \cite{bickel1993efficient}. Moving forward, it could be interesting to modify out of the box machine learning algorithms to specifically learn the low dimensional structures well.

\paragraph{Ethical implications} Generally, explaining prediction models can help to reduce bias or discrimination. Specifically, our methods can be used to reveal higher-order interactions with protected attributes and by that detect bias and reduce such bias by post-hoc feature removal (see Section~\ref{sec:debias}). However, there is more to \textit{fair machine learning} than removing effects of protected attributes \citep[see e.g.][]{barocas-hardt-narayanan} and, as shown by \cite{slack2020fooling}, machine learning explanation methods are not immune to (adversarial) attacks. Thus, results should be interpreted with care. 

\subsection*{Acknowledgments}
 MNW  received funding from the German Research Foundation (DFG), Emmy Noether Grant 437611051. MH has carried out this research in association with the project framework InterAct. JTM was funded by the German Research Foundation (DFG) through the Research Training Group RTG 1953.


\bibliography{mybib}



\clearpage
\onecolumn
\appendix
\section{SHAPLEY VALUES} \label{sec:shapley}
Consider a value function $v_{x_0}$ that assigns a real value
$v_{x_0}(S)$ to each subset $S \subseteq \{1,\dots d\}$.  Shapley axioms provide a unique solution under the four axioms
efficiency, symmetry, dummy and additivity \citep{shapley1953value}, see Appendix~\ref{sec:axioms}.
Defining
$
\Delta_{v}(k, S)=v(S \cup k)-v(S)$,
the Shapley values are
\begin{align}\label{eq:shap2}
\phi_{k}&=\frac{1}{d !} \sum_{\pi \in \Pi_d} \Delta_{v}\left(k, \{\pi(1),\dots,\pi(k-1)\}\right)\\
&=\frac 1 {d!}\sum_{S \subseteq \{1,\dots,d\} \setminus\{k\}} {|S| !(d -|S|-1) !}\Delta_{v}(k, S),
\end{align}
where $\Pi_d$ is the set of permutations of $\{1,\dots,d\}$. We follow \cite{janzing2020feature} and define interventional SHAP values as Shapley values with the value function 
\begin{align}\label{value}
v_{x_0}(S)=
\int \hat m(x)  \hat p_{-S}(x_{-S}) d x_{-S}\rvert_{x=x_0},
\end{align}
which is also the version implemented in TreeSHAP \citep{lundberg2020local}.


\section{SHAPLEY AXIOMS} \label{sec:axioms}
Given a function $m$, a point $x_0$, and a value function $v$, the Shapley axioms \citep{shapley1953value} are

\begin{itemize}
\item \textbf{Efficiency}: $ m\left(x_0\right)=\phi_{0}+\sum_{k=1}^{d} \phi_k(x_0)$.
\item \textbf{Symmetry}: Fix any $k,l \in \{1,\dots,d\}, k\neq l$. 
If $v_{x_0}(S\cup k)=v_{x_0}(S\cup l)$, for all  $S \subseteq \{1,\dots d\}\setminus \{k,l\}$, then $\phi_k(x_0)=\phi_l(x_0)$
\item \textbf{Dummy}  If $v_{x_0}(S\cup k)=v_{x_0}(S)$, for all $S \subseteq \{1,\dots d\}\setminus \{k\}$, then $\phi_k=0$
\item \textbf{Linearity} If $m(x_0)=m^1(x_0)+m^2(x_0)$, then  $\phi_k(x_0)=\phi^1_k(x_0)+\phi^2_k(x_0)$, where $\phi^l$ is the explanation corresponding to the function $m^l$.
\end{itemize}
\section{LEMMATA AND PROOFS}

\subsection{Lemmata}

\begin{lemma}[\cite{shapley1953value}] \label{mobius}For every $U \subseteq\{1,\dots ,d\}$,
\[
\int \hat m(x) p_{-U}(x_{-U})\mathrm dx_{-U}= \sum_{T\subseteq U}  \hat m^{\ast}_T(x_T)
\]
\end{lemma}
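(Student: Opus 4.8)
The plan is to read the identity off directly from the marginal identification \eqref{constraint1}: up to renaming the conditioning set it is exactly the content of Lemma \ref{constraint2}. First I would write $\hat m=\sum_{S\subseteq\{1,\dots,d\}}\hat m^\ast_S$ with the components satisfying \eqref{constraint1}, and move the marginalization inside the finite sum, so that $\int \hat m(x)\,\hat p_{-U}(x_{-U})\,\mathrm d x_{-U}=\sum_{T}\int \hat m^\ast_T(x_T)\,\hat p_{-U}(x_{-U})\,\mathrm d x_{-U}$. Splitting the sum according to whether $T\cap(\{1,\dots,d\}\setminus U)=\emptyset$, i.e.\ $T\subseteq U$, or not, the terms with $T\subseteq U$ have integrand independent of $x_{-U}$ and contribute $\hat m^\ast_T(x_T)$, so the claim reduces to showing the terms with $T\not\subseteq U$ cancel.

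For this cancellation I would apply Lemma \ref{constraint2} with $S:=\{1,\dots,d\}\setminus U$: its left-hand side is precisely $\int \hat m(x)\,\hat p_{-U}(x_{-U})\,\mathrm d x_{-U}$, and its right-hand side is $\sum_{T\cap(\{1,\dots,d\}\setminus U)=\emptyset}\hat m^\ast_T(x_T)=\sum_{T\subseteq U}\hat m^\ast_T(x_T)$, which is exactly the asserted formula. Since Lemma \ref{constraint2} is already available, this is essentially a one-line argument; the only thing to check is that $\hat p_S$ for this choice of $S$ is $\hat p_{-U}$, which holds by definition.

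If one prefers a derivation not routed through Lemma \ref{constraint2}, I would instead substitute the explicit formula of Theorem \ref{lem2}. Setting $g_V(x_V):=\int \hat m(x)\,\hat p_{-V}(x_{-V})\,\mathrm d x_{-V}$, which depends on $x_V$ only, Theorem \ref{lem2} gives $\hat m^\ast_T=\sum_{V\subseteq T}(-1)^{\abs{T\setminus V}}g_V$, and hence
\[
\sum_{T\subseteq U}\hat m^\ast_T(x_T)
=\sum_{V\subseteq U}g_V(x_V)\sum_{T:\,V\subseteq T\subseteq U}(-1)^{\abs{T\setminus V}}
=\sum_{V\subseteq U}g_V(x_V)\sum_{W\subseteq U\setminus V}(-1)^{\abs{W}}.
\]
The inner alternating sum $\sum_{W\subseteq U\setminus V}(-1)^{\abs{W}}$ equals $(1-1)^{\abs{U\setminus V}}$, which is $1$ when $V=U$ and $0$ otherwise, so only the term $V=U$ survives and the right-hand side collapses to $g_U(x_U)=\int \hat m(x)\,\hat p_{-U}(x_{-U})\,\mathrm d x_{-U}$.

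I do not expect a serious obstacle; the manipulations are bookkeeping with subsets and a standard inclusion–exclusion identity. The one genuinely substantive point is conceptual rather than computational: for $T\not\subseteq U$ the individual integral $\int \hat m^\ast_T(x_T)\,\hat p_{-U}(x_{-U})\,\mathrm d x_{-U}$ is in general a nonzero function of $x_{T\cap U}$, so the cancellation is a joint property of the whole family $\{\hat m^\ast_S\}$ enforced by \eqref{constraint1} (equivalently by Theorem \ref{lem2}), not of any single component. A minor technical caveat is that interchanging the finite sum with the integral, and marginalizing the components, requires only the mild integrability of the components against the estimated densities that is assumed implicitly throughout, so Fubini applies without further comment.
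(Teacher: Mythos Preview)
Your second approach---substituting the explicit M\"obius formula from Theorem~\ref{lem2} for each $\hat m^\ast_T$, swapping the order of summation, and collapsing via $\sum_{W\subseteq U\setminus V}(-1)^{|W|}=0$ for $V\neq U$---is exactly the paper's proof, essentially line for line.

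Your first approach via Lemma~\ref{constraint2} is correct and more economical, but be aware of a circularity within the paper's own logical order: the paper uses Lemma~\ref{mobius} in the proof of Corollary~\ref{thm1} precisely to verify that the explicitly defined $\hat m^\ast$ satisfies the marginal identification~\eqref{constraint1}. Invoking Lemma~\ref{constraint2} for $\hat m^\ast$ presupposes that~\eqref{constraint1} already holds for it, which would close a loop. If one takes Theorem~\ref{lem2} as externally established (as the M\"obius-inverse/Harsanyi-dividend remark invites), the shortcut is legitimate; otherwise the direct inclusion--exclusion computation is required, and that is what the paper does.
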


\begin{proof}
\begin{align*}
\sum_{T\subseteq U} \hat m_T(x_T)
&= \sum_{T \subseteq U} \sum_{V \subseteq T} (-1)^{\abs{T\setminus V}}\int  \hat{m}_{\mathrm{Sum}}(x) p_{-V}(x_{-V})\mathrm dx_{-V}\\
&= \sum_{V \subset U} \int  \hat{m}_{\mathrm{Sum}}(x) p_{-V}(x_{-V})\mathrm dx_{-V} \sum_{S \subseteq\{1,\dots, \abs{U\setminus V}\} } (-1)^{\abs S}\\
&\quad +  \int  \hat{m}_{\mathrm{Sum}}(x) p_{-U}(x_{-U})\\
&=  \int \hat{m}_{\mathrm{Sum}}(x) p_{-U}(x_{-U}),
\end{align*}
where the last equation follows from  $\sum_{S \subseteq\{1,\dots, \abs{U-V}\} } (-1)^{\abs S}$=0, noting that a non-empty set has an equal number of subsets with an odd number of elements as subsets with an even number of elements.
\end{proof}

\subsection{Proofs}

\begin{proof}[Proof of Lemma  \ref{constraint2}]
Note that for every $S\subseteq \{1,\dots,d\}$
\begin{align*}
\sum_T \int  \hat m_T(x_T)p_S(x_S)\mathrm dx_S &=\sum_{T \cap S = \emptyset} \hat m_T(x_T)  + \sum_{T \cap S \neq \emptyset} \int \hat m_T(x_T) p_S(x_S)\mathrm dx_S.
\end{align*}
Hence, condition \eqref{constraint1} is equivalent to
\[
\sum_T \int  \hat m_T(x_T)p_S(x_S)\mathrm dx_S =\sum_{T \cap S = \emptyset} \hat m_T(x_T).
\]
\end{proof}

\begin{proof}[Proof of Theorem  \ref{lem2}]
As stated in the main text, this result is a well known result. Here we provide an alternative proof by 
showing that the solution is the same as the solution of Corollary \ref{thm1}. In the proof of Corollary \ref{thm1} we then show that the solution is the desired one.

We consider a fixed $S\subseteq \{1,\dots, d\}$. 
We will make use of the fact that for a set  $T\not\supseteq  S$ 
\begin{align}\label{subsets}
&\sum_{V \subseteq S} (-1)^{\abs{S\setminus V}}\int\hat  m^{(0)}_T(x_T) p_{-V}(x_{-V})\mathrm dx_{-V} =0,
\end{align}
and for a set $T\subseteq \{1,\dots, d\}, T\supseteq S$
\begin{align}\label{subsets2}
\{U: T\setminus S \subseteq U \subseteq T\}=\{ T \setminus V: V \subseteq S\}
\end{align}
Combining  \eqref{subsets}--\eqref{subsets2}, we get
\begin{align*}
&\sum_{V \subseteq S} (-1)^{\abs{S\setminus V}}\int \hat{m}_{\mathrm{Sum}}(x) p_{-V}(x_{-V})\mathrm dx_{-V}\\
&=\sum_{T\subseteq \{1,\dots,d\}}\sum_{V \subseteq S} (-1)^{\abs{S\setminus V}}\int \hat m^{(0)}_T(x_T) p_{-V}(x_{-V})\mathrm dx_{-V}\\
&=\sum_{T\supseteq S}\sum_{V \subseteq S} (-1)^{\abs{S}-\abs{V}}\int \hat m^{(0)}_T(x_T) p_{T\setminus V}(x_{-T\setminus V})\mathrm dx_{T\setminus V}\\
&=\sum_{T \supseteq S} \sum_{T\setminus S \subseteq U\subseteq  T}(-1)^{\abs{S}-\abs{T\setminus U}}\int \hat m^{(0)}_T(x_T) p_U(x_U)\mathrm dx_U.
\end{align*}
It is left to show  \eqref{subsets}--\eqref{subsets2}. Equation \eqref{subsets2} follows from straight forward calculations.
To see \ref{subsets}, note
\begin{align*}
&\sum_{V \subseteq S} (-1)^{\abs{S\setminus V}}\int \hat m^{(0)}_T(x_T) p_{-V}(x_{-V})\mathrm dx_{-V} \\
=&\sum_{U \subseteq S\cap T} \sum_{W \subseteq S\setminus T}(-1)^{\abs{S\setminus \{W\cup U\}}}\int \hat m^{(0)}_T(x_T) p_{-\{U\cup W\}}(x_{-\{U\cup W\}})\mathrm dx_{-\{U\cup W\}} \\
=&\sum_{U \subseteq S\cap T} \int \hat m^{(0)}_T(x_T) p_{-U}(x_{-U})\mathrm dx_{-U} \sum_{W \subseteq S\setminus T}(-1)^{\abs{S\setminus \{W\cup U\}}} \\
=&\sum_{U \subseteq S\cap T} \int \hat m^{(0)}_T(x_T) p_{-U}(x_{-U})\mathrm dx_{-U} \left(\sum_{W \subseteq S\setminus T, \abs W=\text{odd}}(-1)^{\abs{S\setminus U} - 1}+\sum_{W \subseteq S\setminus T, \abs W=\text{even}}(-1)^{\abs{S\setminus U} }\right) \\
=&0,
\end{align*}
where the last equality follows from the fact that every non-empty set has an equal number of odd and even subsets.
\end{proof}

\begin{proof}[Proof of Corollary \ref{cor:mobius}]
This proof is analogue to the proof of Lemma 3.1 in \cite{shapley1953value}.
From \ref{mobius}, We have
$
\int \hat m(x) p_{-U}(x_{-U})\mathrm dx_{-U}= \sum_{T\subseteq U} \hat m^{\ast}_T(x_T).
$
Hence the game $\hat m$ with value function
\[
v_{\hat m}(U)=\int \hat  m(x) p_{-U}(x_{-U})\mathrm dx_{-U}
\]
equals the game
$m^\ast$  with value function
\[
v_{\hat m^\ast}(U)=\sum_{S \subseteq \{1,\dots,d\}} \hat m^{\ast}_S(x_S)\delta_S(U), \quad \delta_S(U)= 1(  S\subseteq U).
\]
We now concentrate on the function $\hat m_S^\ast$ with value function 
$\hat m^{\ast}_S(x_S)\delta_S(U)$.
We will show that for every non-empty $S \subseteq\{1,\dots, d\}$,
\begin{align}\label{proof:shap}
\phi_k(x,\hat m^{\ast}_S(x_S)\delta_S(U)) = 1(k \in S) \abs S^{-1} \hat m^{\ast}_S(x_S).
\end{align}
Here, $\phi_k(x,v)$ denotes the Shapley value for feature $k$ at point $x$ in a game with value function $v$.
 The proof is then completed by the linearity axiom, together with
\[
\phi_k(x, \hat m^{\ast}_\emptyset\delta_\emptyset(U) )= \begin{cases}\hat  m^{\ast}_\emptyset, &k=0, \\  0 & \text{else}.\end{cases}
\]
The last statement follows from the dummy axiom.
To show \eqref{proof:shap}, let's assume that $j,k \in S$. Then for every 
$U\subseteq\{1,\dots, d\}$, $\hat m^{\ast}_S(x_S)\delta_S(U\cup j)=\hat m^{\ast}_S(x_S)\delta_S(U\cup k)$,
which, by the symmetry axiom, implies 
\[
\phi_j(x,\hat m^{\ast}_S(x_S)\delta_S(U)) =\phi_k(x,\hat m^{\ast}_S(x_S)\delta_S(U)).
\]
Additionally, for $k\not \in S$, we have $\phi_j(x,\hat m^{\ast}_S(x_S)\delta_S(U)) =0$, by the dummy axiom. Hence we conclude \eqref{proof:shap} by applying \eqref{eq:shap}.
\end{proof}

\begin{proof}[Proof of Corollary \ref{thm1}]
Lemma \ref{mobius} implies that $\hat m^\ast$, as defined in \eqref{sol},  is a solution of the marginal identification.
To see this, for $S\subseteq\{1,\dots ,d\}$, consider the following decomposition of $\int m^\ast(x) p_S(x_S)\mathrm dx_S$
\begin{align*}
\int \hat m^\ast(x) p_S(x_S)\mathrm dx_S&=  \sum_{T \cap S \neq \emptyset} \int \hat  m^\ast_T(x_T) p_S(x_S)\mathrm dx_S + \sum_{T \cap S = \emptyset}  \hat m^\ast_T(x_T).
\end{align*}
Using Lemma \ref{mobius}, we have 
\[
\int  \hat m^\ast(x) p_S(x_S)\mathrm dx_S=\sum_{T\subseteq S^c} \hat m^\ast_T(x_T)=\sum_{T \cap S = \emptyset} \hat m^\ast_T(x_T),
\]
which with the previous statement implies that $ \hat m^\ast$ is a solution:
\[
\sum_{T \cap S \neq \emptyset} \int   \hat m^\ast_T(x_T) p_S(x_S)\mathrm dx_S=0.
\]
It is left to show that the solution is unique. 
Assume that there are two set of functions $\hat m^{\circ}$ and $\hat m^{\ast}$ that satisfy
\eqref{constraint1}  with $\sum_S \hat m_S^{\circ}=\sum_S\hat m_S^{\ast}$. From Lemma \ref{constraint2},
it follows that for all $S\subseteq \{1,\dots,d\}$
\[
\sum_{T \cap S = \emptyset} \hat m^{\circ}_T(x_T)=\sum_{T \cap S = \emptyset} \hat m^{\ast}_T(x_T),
\]
implying $\hat m^{\circ}_T(x_T)=\hat m^{\ast}_T(x_T)$ for all $T\subseteq\{1,\dots,d\}.$
\end{proof}

\section{CONNECTING A GENERAL GLOBAL EXPANSION TO SHAP VALUES} \label{generalexpansion}
If a regression or classification function $m$ is not identified via \eqref{constraint1},
then calculating interventional SHAP values from such a decomposition leads to lengthy and non-trivial expressions. Here, we show how the terms up to dimension three in a general  non-identified decomposition
enter into a SHAP value. The following formula follows from straight forward calculations using \eqref{eq:shap}.

For $v_{x}(S)= \int m(x){p_{-S}(x_{-S})} d x_{-S}$, and $m(x)=\sum_{S\subseteq \{1,\dots,d\}} m_S(x_S)$, we get


\begin{align*}
\phi_1(x_0)&= m_1(x_1) - E[m_1(X_1)] \\
&+ \frac 1 2  \left \{\sum_{j\neq1} m_{1j}(x_1,x_j)  -  E[m_{1j}(X_1,X_j)] \right.  \\ 
& \qquad  \quad  +  \left. \sum_{j\neq 1} E[m_{1j}(x_1,X_j)] -   E[m_{1j}(X_1,x_j)] \right\}\\
&+ \frac 1 3  \left \{\sum_{j,k \neq 1, j < k} m_{1jk}(x_1,x_j,x_k)  - E[m_{1jk}(X_1,X_j, X_k)]\right.\\
& \qquad  \quad  +    \sum_{j,k \neq 1, j< k}   E[m_{1jk}(x_1,X_j,X_k)]-E[m_{1jk}(X_1,x_j, x_k)] \\
& \qquad  \quad  +     \frac 1 2\sum_{j,k \neq 1,j<k}E[m_{1jk}(x_1,X_j, x_k)]  - E[m_{1jk}(X_1,x_j,X_k)] \\
& \qquad  \quad  +    \frac 1 2 \left.\sum_{j,k \neq 1, j<k} E[m_{1jk}(x_1,x_j, X_k)]  - E[m_{1jk}(X_1,X_j,x_k)] \right\}\\
&+ \frac 1 4  \left \{\sum_{j,k,l \neq 1, j< k< l} m_{1jkl}(x_1,x_j,x_k, x_l)  - E[m_{1jkl}(X_1,X_j, X_k, X_l)]\right.\\
& \qquad  \quad  +    \sum_{j,k \neq 1,j< k< l}   E[m_{1jk}(x_1,X_j,X_k, X_l)]-E[m_{1jk}(X_1,x_j, x_k,x_l)] \\
& \qquad  \quad  +     \frac 1 2\sum_{j,k \neq 1, j< k< l}E[m_{1jk}(x_1,x_j, X_k,X_l)]  - E[m_{1jk}(X_1,X_j,x_k,x_l)] \\
& \qquad  \quad  +    \frac 1 2 \sum_{j,k \neq 1, j< k< l} E[m_{1jk}(x_1,X_j, x_k,X_l)]  - E[m_{1jk}(X_1,x_j,X_k,x_l)] \\
& \qquad  \quad  +    \frac 1 2 \sum_{j,k \neq 1, j< k< l} E[m_{1jk}(x_1,X_j, X_k,x_l)]  - E[m_{1jk}(X_1,x_j,x_k,X_l)] \\
& \qquad  \quad  +     \frac 1 3\sum_{j,k \neq 1, j< k< l}E[m_{1jk}(x_1,x_j, x_k,X_l)]  - E[m_{1jk}(X_1,X_j,X_k,x_l)] \\
& \qquad  \quad  +    \frac 1 3 \sum_{j,k \neq 1, j< k< l} E[m_{1jk}(x_1,x_j, X_k,x_l)]  - E[m_{1jk}(X_1,X_j,x_k,X_l)]\\
& \qquad  \quad  +    \frac 1 3 \left.\sum_{j,k \neq 1, j< k< l} E[m_{1jk}(x_1,X_j, x_k,x_l)]  - E[m_{1jk}(X_1,x_j,X_k,X_l)] \right\}\\
& \qquad \qquad \qquad \qquad  \cdots
\end{align*}

\section{CALCULATING THE FUNCTIONAL DECOMPOSITION OF SHAP VALUES FROM LOW-DIMENSIONAL TREE STRUCTURES} \label{sec:calculate}
Our proposed decomposition can be calculated from tree-based models by directly applying Corollary~\ref{thm1}. Inspired by \cite{lundberg2020local}, we first describe naïve algorithms for \textit{xgboost} and \textit{random planted forest} models and then describe an improved algorithm for \textit{xgboost} that only needs a single recursion through each tree.

\subsection{Naïve xgboost Algorithm}\label{sec:xgboost_naive}
For all subsets of features $S \subseteq \{1,\dots, d\}$, we calculate the decomposition $\hat{m}_S(x_i)$ for all observations of interest $x_i \in \mathbf{X}$ recursively for each tree with features $T$ by considering all subsets $U, T$ with $T\setminus S \subseteq U\subseteq  T$. In each node of a tree, if the node is a leaf node we return it's prediction (e.g. the mean in CART-like trees). For internal (non-leaf) nodes, the procedure depends on whether the feature used for splitting in the node is in the subset $U$ or not. If the feature is in the subset $U$, we continue in both the left and right children nodes, each weighted by the coverage, i.e. the proportion of training observations going left and right, respectively. If the feature is not in the subset $U$, we apply the splitting criterion of the node and continue with the respective node selected by the splitting procedure for observation $x_i$. See Algorithm~\ref{alg:naive} for the full algorithm in pseudo code. 

\begin{algorithm}[H]
  \caption{\sc Naïve xgboost algorithm}
  \label{alg:naive}
\begin{algorithmic}
    \STATE {\bfseries Procedure:} \textsc{decompose}($\mathbf{X}$, $\hat{m}(x)$)
    
    \STATE {\bfseries Input:} Dataset to be explained $\mathbf{X} \in \mathbb{R}^{n \times d}$, tree-based model $\hat{m}(x)$ with $B$ trees
    \STATE {\bfseries Output:} Components $\hat{m}_S(x_i)$ for all $S \subseteq \{1,\dots, d\}$ and $x_i \in \mathbf{X}$
    \FOR{$i \in 1,\dots,n$}
        \FOR{$S \subseteq \{1,\dots, d\}$}
            \STATE $\hat{m}_S(x_i) \gets 0$
            \FOR{$\text{tree} \in \{1,\dots, B\}$}
                \IF{$T \supseteq S$}
                    \FOR{$U: T\setminus S \subseteq U\subseteq  T$}
                        \STATE $\hat{m}_S(x_i) \gets \hat{m}_S(x_i) + (-1)^{\abs{S}-\abs{T\setminus U}} \textsc{recurse}(\text{tree}, U, x_i, 0)$
                    \ENDFOR
                \ENDIF
            \ENDFOR
        \ENDFOR
    \ENDFOR
    \STATE {\bfseries Return:} $\hat{m}_S$
    \STATE
    \STATE {\bfseries Procedure:} \textsc{recurse}($\text{tree}, U, x_i, \text{node}$)
    \STATE {\bfseries Input:} Tree ID (tree), subset $U$, data point $x_i$, node ID (node)
    \STATE {\bfseries Output:} Coverage-weighted prediction
    \IF{\textsc{isleaf}(node)}
        \STATE {\bfseries Return:} \textsc{prediction}(node)
    \ELSE
        \STATE $j \gets$ \textsc{split-feature}(node)
        \IF{$j \in U$}
            \STATE $C_{\text{left}} \gets$ \textsc{coverage}(left-node)
            \STATE $C_{\text{right}} \gets$ \textsc{coverage}(right-node)
            \STATE {\bfseries Return:} $C_{\text{left}} \textsc{recurse}(\text{tree}, U, x_i, \text{left-node}) + C_{\text{right}} \textsc{recurse}(tree, U, x_i, \text{right-node})$
        \ELSE
            \IF{$x_i^j \leq$ \textsc{split-value}(node)}
                \STATE {\bfseries Return:} $\textsc{recurse}(\text{tree}, U, x_i, \text{left-node})$
            \ELSE
                \STATE {\bfseries Return:} $\textsc{recurse}(\text{tree}, U, x_i, \text{right-node})$
            \ENDIF
    \ENDIF
  \ENDIF
\end{algorithmic}
\end{algorithm}

\subsection{Improved xgboost Algorithm}
To improve the algorithm described in Section~\ref{sec:xgboost_naive} and Algorithm~\ref{alg:naive}, we pre-calculate the contribution of each tree for all $n$ observations and tree-subsets $T$ in a single recursive procedure by filling an $n \times 2^D$ matrix, where $D$ is the tree depth. In a second step, we just have to sum these contributions with the corresponding sign (see Corollary~\ref{thm1}). See Algorithm~\ref{alg:improved} for the algorithm in pseudo code. 

\begin{algorithm}[H]
  \caption{\sc Improved xgboost algorithm}
  \label{alg:improved}
\begin{algorithmic}
    \STATE {\bfseries Procedure:} \textsc{decompose}($\mathbf{X}$, $\hat{m}(x)$)
    
    \STATE {\bfseries Input:} Dataset to be explained $\mathbf{X} \in \mathbb{R}^{n \times d}$, tree-based model $\hat{m}(x)$ with $B$ trees
    \STATE {\bfseries Output:} Components $\hat{m}_S(x_i)$ for all $S \subseteq \{1,\dots, d\}$ and $x_i \in \mathbf{X}$
    \FOR{$S \subseteq \{1,\dots, d\}$}
        \STATE $\hat{m}_S(\mathbf{X}) \gets 0$
    \ENDFOR
    \FOR{$\text{tree} \in \{1,\dots, B\}$}
        \STATE $\mathcal{U} = \{U: U\subseteq  T\}$
        \STATE $\mathbf{\hat{M}}(\mathbf{X}, \mathcal{U}) \gets  \textsc{recurse}(\text{tree}, \mathcal{U}, \mathbf{X}, 0)$
        \FOR{$S \subseteq \{1,\dots, d\}$}
            \IF{$T \supseteq S$}
                \FOR{$U: T\setminus S \subseteq U\subseteq  T$}
                    \STATE $\hat{m}_S(\mathbf{X}) \gets \hat{m}_S(\mathbf{X}) + (-1)^{\abs{S}-\abs{T\setminus U}} \mathbf{\hat{M}}(\mathbf{X}, U) $
                \ENDFOR
            \ENDIF
        \ENDFOR
    \ENDFOR
    \STATE {\bfseries Return:} $\hat{m}_S$
    \STATE
    \STATE {\bfseries Procedure:} \textsc{recurse}($\text{tree}, \mathcal{U}, \mathbf{X}, \text{node}$)
    \STATE {\bfseries Input:} Tree ID (tree), set of subsets $\mathcal{U}$, data matrix $\mathbf{X}$, node ID (node)
    \STATE {\bfseries Output:} Matrix of coverage-weighted predictions
    \IF{\textsc{isleaf}(node)}
        \STATE $\mathbf{\hat{M}}(\mathbf{X},\mathcal{U}) \gets \textsc{prediction}(node)$
    \ELSE
        \STATE $\mathbf{\hat{M}}_\text{left}(\mathbf{X},\mathcal{U}) \gets \textsc{recurse}(\text{tree}, \mathcal{U}, \mathbf{X}, \text{left-node})$
        \STATE $\mathbf{\hat{M}}_\text{right}(\mathbf{X},\mathcal{U}) \gets \textsc{recurse}(\text{tree}, \mathcal{U}, \mathbf{X}, \text{right-node})$
        \STATE $C_{\text{left}} \gets$ \textsc{coverage}(left-node)
        \STATE $C_{\text{right}} \gets$ \textsc{coverage}(right-node)
        \STATE $j \gets$ \textsc{split-feature}(node)
        \STATE $\mathbf{X}_\text{left} = (x_i: x_i^j \leq \textsc{split-value}(node))$
        \STATE $\mathbf{X}_\text{right} = (x_i: x_i^j > \textsc{split-value}(node))$
        \FOR{$U \in \mathcal{U}$}
            \IF{$j \in U$}
                \STATE $\mathbf{\hat{M}}(\mathbf{X}, U) \gets C_{\text{left}} \mathbf{\hat{M}}_\text{left}(\mathbf{X},U) + C_{\text{right}} \mathbf{\hat{M}}_\text{right}(\mathbf{X},U)$
            \ELSE
                \STATE $\mathbf{\hat{M}}(\mathbf{X}_\text{left}, U) \gets \mathbf{\hat{M}}_\text{left}(\mathbf{X}_\text{left},U)$
                \STATE $\mathbf{\hat{M}}(\mathbf{X}_\text{right}, U) \gets \mathbf{\hat{M}}_\text{right}(\mathbf{X}_\text{right},U)$
            \ENDIF
        \ENDFOR
    \ENDIF
    \STATE {\bfseries Return:} $\mathbf{\hat{M}}(\mathbf{X},\mathcal{U})$
\end{algorithmic}
\end{algorithm}

\subsection{Random Planted Forest Algorithm}\label{sec:rpf_naive}
For the random planted forest (rpf) algorithm, we use a different approach. By slightly altering the representation of an rpf in \cite{hiabu2020random}, the result of an rpf is given by a set $$\hat m^{(0)}=\{\hat m^{(0)}_{S,b} | S\subseteq \{1,\dots,d\},\ b\in\{1,\dots,B\}\},$$ where each estimator $\hat m^{(0)}_{S,b}$ can be represented by a finite partition defined by an $|S|$-dimensional grid (leaves) and corresponding values. Thus, we start with
\begin{itemize}
    \item a grid $G_k=\{x_{k,1},\dots,x_{k,t_k}\}$ for each coordinate $k\in\{1,\dots,d\}$,
    \item for each $S\subseteq \{1,\dots,d\}, b\in\{1,\dots,B\}$ an array representing the value of $m^{(0)}_{S,b}(x)$ for each coordinate $x\in\times_{k\in S}G_k$. Here $x$ is considered to be the bottom left corner of a hyperrectangle.
\end{itemize}
Note that every tree-based algorithm can be described in such a manner. Given an estimator $\hat{p}_S$ and using this representation, directly calculating \eqref{sol} is simple, where for each combination of sets $U,T\subseteq \{1,\dots,d\}$ with $U\subseteq T$, we only need to calculate the term $\int \hat m^{(0)}_{T,b}(x_T) \hat p_U(x_U)\mathrm dx_U$ once and then add/subtract it to the correct estimators $\hat{m}^*_S$. See Algorithm~\ref{alg:naive_rpf} for the full algorithm in pseudo code.

\begin{algorithm}[H]
  \caption{\sc Naïve rpf algorithm}
  \label{alg:naive_rpf}
\begin{algorithmic}
    \STATE {\bfseries Procedure:} \textsc{decompose}($\mathbf{X}$, $\hat{m}(x)$)
    
    \STATE {\bfseries Input:} Dataset to be explained $\mathbf{X} \in \mathbb{R}^{n \times d}$, tree-based model with initial decomposition $\hat{m}_{S,b}^{(0)}(x)$ for $S\subseteq\{1,\dots,d\}$ and trees $b=\{1,\dots,B\}$, estimator $\hat{p}_S$
    \STATE {\bfseries Output:} Components $\hat{m}_S(x_i)$ for all $S \subseteq \{1,\dots, d\}$ and $x_i \in \mathbf{X}$
    \FOR{$i \in 1,\dots,n$}
        \FOR{$S \subseteq \{1,\dots, d\}$}
            \STATE $\hat{m}_S(x_i) \gets 0$
        \ENDFOR
        \FOR{$b \in \{1,\dots, B\}$}
            \FOR{$T \subseteq \{1,\dots, d\}$}
                \FOR{$U \subseteq T$}
                    \STATE $\mathrm{update}_{T,U} \gets \int \hat m^{(0)}_{T,b}(x_{i,T\backslash U},x_U) \hat p_U(x_U)\mathrm dx_U $
                    \FOR{$S: T\setminus S \subseteq U,\ S\subseteq  T$}
                        \STATE $\hat{m}_S(x_i) \gets \hat{m}_S(x_i) + (-1)^{\abs{S}-\abs{T\setminus U}} \mathrm{update}_{T,U}$
                    \ENDFOR
                \ENDFOR
            \ENDFOR
        \ENDFOR
        \STATE $\hat{m}_S(x_i) \gets \hat{m}_S(x_i)/B$
    \ENDFOR
    \STATE {\bfseries Return:} $\hat{m}_S$
\end{algorithmic}
\end{algorithm}

For the calculation of an estimator $\hat{p}_S$ in our simulations we used the following. For each $S\subseteq \{1,\dots,d\}$, let $a_S(x)$ be the number of data points residing in the hyperrectangle with bottom left corner $x$ for each coordinate $x\in\times_{k\in S}G_k$. For $|S|$-dimensional $y$ we then set 
\[
    \hat{p}_S(y)=\frac{a_S(x_y)}{\sum_{x\in \times_{k\in S}G_k} a_S(x)}\frac{1}{\mathrm{vol}(x)},
\]
where $x_y$ is the coordinate of the bottom left corner of the hyperrectangle which includes $y$ and $\mathrm{vol}(x)$ is the volume of the hyperrectangle corresponding to $x$. Using this estimator, the updating function in the algorithm simplifies to
\[
    \mathrm{update}_{T,U} = \sum_{x_U\in\times_{k\in U}G_k} \hat m^{(0)}_{T,b}(x_{i,T\backslash U},x_U) \hat p_U(x_U). 
\]

\section{COMPUTATIONAL COMPLEXITY}

To address computational complexity to calculate all components of the decomposition, we performed a runtime comparison of our proposed algorithm with alternative methods to calculate TreeSHAP interactions. 
See Figure~\ref{fig:runtime} for the results.
We find that, to calculate 2-way interactions, our method is very competitive with the alternatives, i.e. our runtime is similar to that of the TreeSHAP implementation included in the \texttt{xgboost} library, which is also used by the official Python implementation of SHAP (\url{https://github.com/slundberg/shap}).

\begin{figure}
    \centering
    \includegraphics[width=\linewidth]{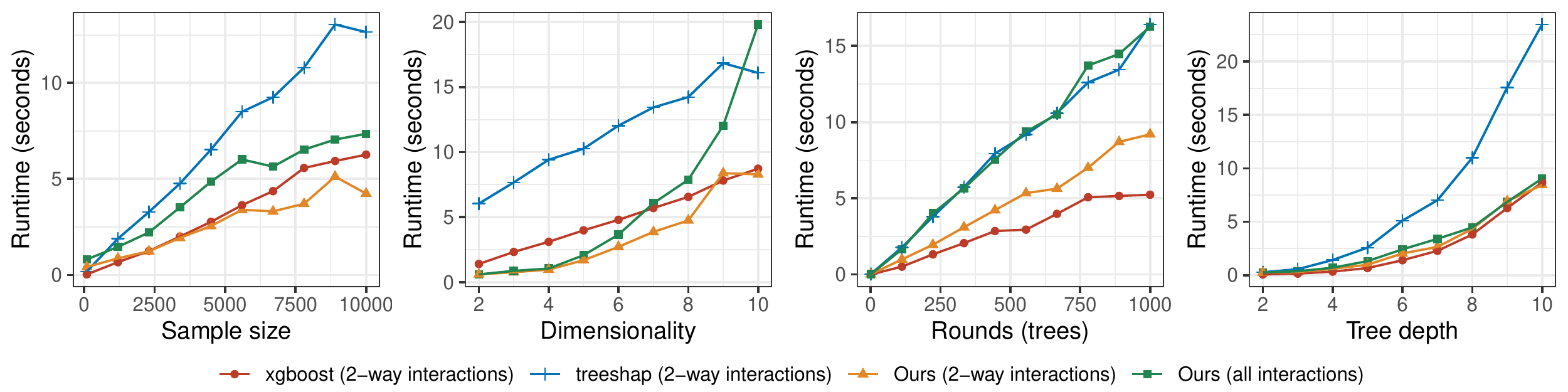} 
    \caption{Runtime comparison. Runtime in seconds to calculate 2-interaction SHAP with different implementations, by sample size, dimensionality, number of rounds/trees and tree depth. In addition, runtime to calculate all interactions with our proposed method (green).}
    \label{fig:runtime}
\end{figure}

\section{EXPERIMENTS WITH RANDOM PLANTED FOREST}

This section shows the simulation results when using the \textit{random planted forest algorithm} as an estimation procedure. First of all, the results considering the motivating example from Section \ref{sec:example} are given in Figure \ref{fig:rpf_simple_example}. The following subsections include the results of experiments which where discussed in Section \ref{Experiments}. 

\begin{figure}
    \centering
    \includegraphics[width=\linewidth]{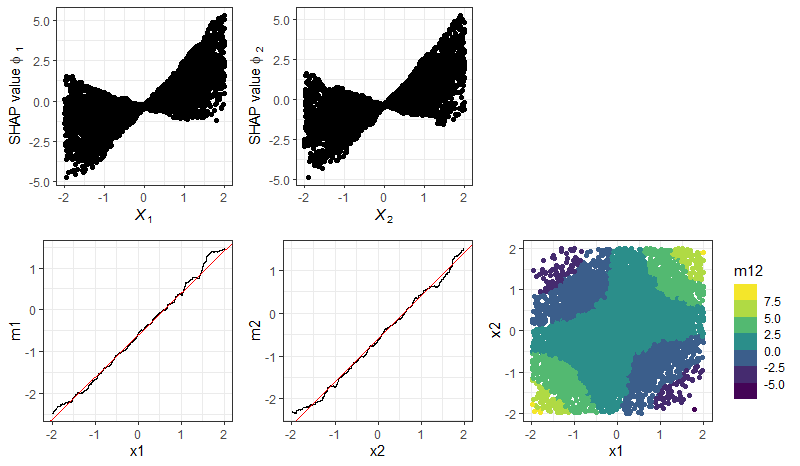} 
    \caption{Simple example. SHAP values (top row) and functional decomposition (bottom row) of a \textit{random planted forest} model of the function $m(x_1,x_2)=x_1+x_2 + 2x_1x_2$. The red lines in the bottom row represent the SHAP values of the true function.} 
    \label{fig:rpf_simple_example}
\end{figure}
\subsection{Global Explanations}\label{sec:bike_rpf}

Figure \ref{fig:bike_rpf} includes the results discussed in Section \ref{sec:bike} when considering the \textit{random planted forest algorithm} instead of \textit{xgboost}.

\begin{figure}[htbp]
    \centering
    \includegraphics[width=1\linewidth]{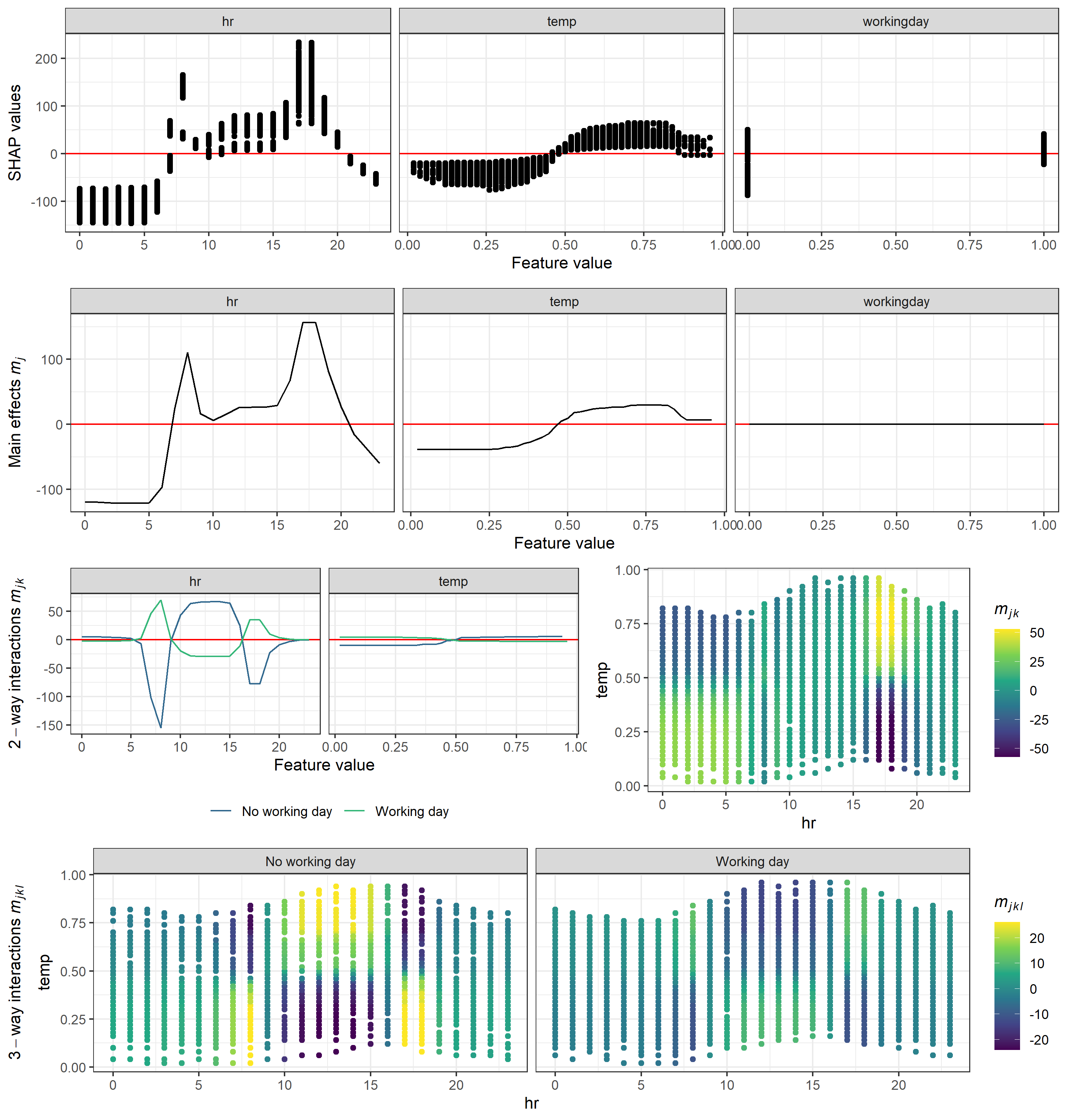}
    \caption{Bike sharing example (\textit{random planted forest}). SHAP values (top row),  main effects (second row), 2-way interactions (third row) and 3-way interactions (bottom row) of the features \textit{hour of the day} (hr, 0-24 full hours), \textit{Temperature} (temp, normalized to 0-1) and \textit{working day} (workingday, 0=no, 1=yes) of the bike sharing data.}
    \label{fig:bike_rpf}
\end{figure}

\subsection{Feature Importance}\label{sec:expvim_rpf}

Figure \ref{fig:vim_rpf} includes the results discussed in Section \ref{sec:expvim} when considering the \textit{random planted forest algorithm} instead of \textit{xgboost}.

\begin{figure}[htbp]
    \centering
    \includegraphics[width=\linewidth]{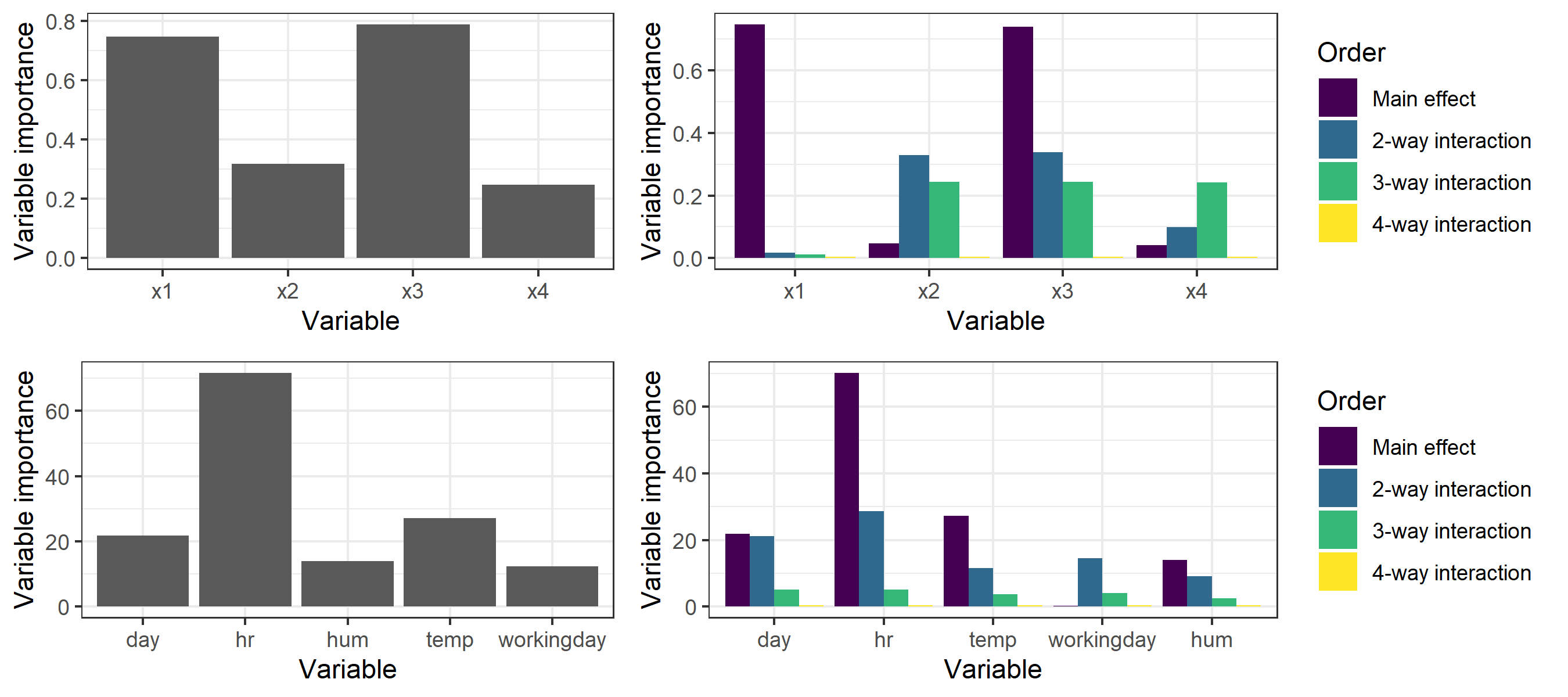}
    \caption{Feature importance (\textit{random planted forest}) for the function $m(x) = x_1 + x_3 + x_2 x_3 - 2 x_2 x_3 x_4$ (top row) and the bike sharing data from Section~\ref{sec:bike} (bottom row)  based on SHAP values (left column) and our functional decomposition separately for main effects and interactions of different orders (right column). } 
    \label{fig:vim_rpf}
\end{figure}

\subsection{Post-hoc Feature Removal}\label{sec:debias_rpf}

Figure \ref{fig:dediscr_rpf} includes the results discussed in Section \ref{sec:debias} when considering the \textit{random planted forest algorithm} instead of \textit{xgboost}.

\begin{figure}[htbp]
    \centering
    \includegraphics[width=\linewidth]{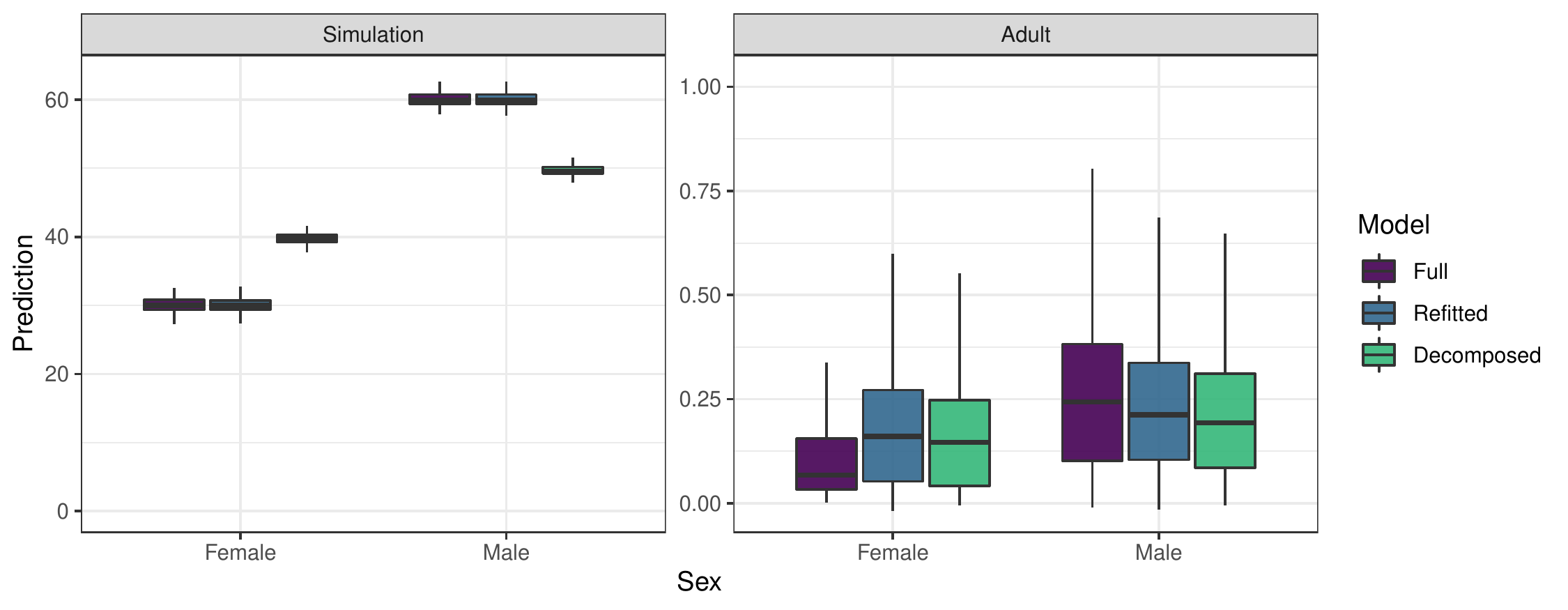}
    \begin{tabular}{lrrr}
    \toprule
    Setting & \multicolumn{3}{c}{Median difference} \\ 
    & Full & Refitted & Decomposed \\
    \midrule
    Simulation & 29.90 & 29.91 & 9.84 \\
    Adult & 0.18 & 0.052& 0.047 \\ 
    \bottomrule
    \end{tabular}   
    \caption{Post-hoc feature removal (\textit{random planted forest}). Predictions in a simulation (left) and the \textit{adult} dataset for males and females of the full model, a refitted model without the protected feature \textit{sex} and a decomposed model where the feature \textit{sex} was removed post-hoc. The table below shows the median differences between females and males for the three models.}
    \label{fig:dediscr_rpf}
\end{figure}

\section{COMPUTING ENVIRONMENT}
A 64-bit Linux platform running Ubuntu 20.04 with an AMD Ryzen Threadripper 3960X (24 cores, 48 threads) CPU and 256 GByte RAM was used for all computations with R version 4.1.2.

\end{document}